\newcommand{\Ctpos}[1][f]{C_{#1}^t}
\newcommand{\onespect}{_{1}}
\newenvironment{proof2}[1]{\par\noindent{\bf Proof #1:\ }}{\hfill$\Box$\medskip}
\newtheorem{lemma}{Lemma}[section]
\newtheorem{proposition}{Proposition}[section]
\newtheorem{theorem}{Theorem}[section]
\newtheorem{definition}{Definition}[section]
\newcommand{\abs}[1]{\ensuremath {\left| #1 \right|}}
\newcommand{\norm}[1]{\ensuremath{\left\| #1 \right\|}}
\newcommand{\curlybrackets}[1]{\ensuremath{\left\{ #1 \right\}}}
\newcommand{\inner}[1]{\left\langle#1\right\rangle}
\def\R{\mathbb{R}}
\def\N{\mathbb{N}}
\def\median{\mathop{\rm median}\nolimits}
\def\argmax{\mathop{\rm arg\,max}\limits}
\def\argmin{\mathop{\rm arg\,min}\limits}
\def\sign{\mathop{\rm sign}\limits}
\def\cut{\mathrm{cut}}
\def\Rcut{\mathrm{RCut}}
\def\RCC{\mathrm{RCC}}
\def\min{\mathop{\rm min}\nolimits}
\def\max{\mathop{\rm max}\nolimits}
\def\ones{\mathbf{1}}
\newenvironment{proof}{\par\noindent{\bf Proof:\ }}{\hfill$\Box$\\[2mm]}
\title{An Inverse Power Method for Nonlinear Eigenproblems with Applications in \\$1$-Spectral Clustering and Sparse PCA}
\author{
Matthias Hein \hspace{0.5cm}  Thomas B\"uhler\\
Saarland University, Saarbr\"ucken, Germany\\
\texttt{\{hein,tb\}@cs.uni-saarland.de} 
}
\begin{document}
\title{An Inverse Power Method for Nonlinear Eigenproblems with Applications in \\$1$-Spectral Clustering and Sparse PCA}

\maketitle

\begin{abstract}
Many problems in machine learning and statistics can be formulated as (generalized) eigenproblems. In terms of the associated
optimization problem, computing linear eigenvectors amounts to finding critical points of a quadratic function subject to quadratic constraints. 
In this paper we show that a certain class of constrained optimization problems with nonquadratic objective and constraints
can be understood as \emph{nonlinear eigenproblems}. We derive a generalization of the inverse power method
which is guaranteed to converge to a nonlinear eigenvector.
We apply the inverse power method to 1-spectral clustering and sparse PCA which can naturally be formulated as nonlinear eigenproblems.
In both applications we achieve state-of-the-art results in terms of solution quality and runtime. 
Moving beyond the standard eigenproblem should be useful also in many other applications and our inverse power method
can be easily adapted to new problems. 
\end{abstract}

\section{Introduction}
Eigenvalue problems associated to a symmetric and positive semi-definite matrix are quite abundant in machine learning and statistics. 
However, considering the eigenproblem from a variational point of view using Courant-Fischer-theory, the objective is a ratio of
quadratic functions, which is quite restrictive from a modeling perspective. We show in this paper that using a ratio of $p$-homogeneous functions leads quite naturally to a
\emph{nonlinear} eigenvalue problem, associated to a certain nonlinear operator. Clearly, such a generalization is only interesting if certain properties
of the standard problem are preserved and efficient algorithms for the computation of nonlinear eigenvectors are available. In this paper we present an
efficient generalization of the inverse power method (IPM) to nonlinear eigenvalue problems and study the relation to the standard problem.
While our IPM is a general purpose method, we show for two unsupervised learning problems that it can be easily adapted to a particular application.

The first application is spectral clustering \cite{Lux07}. In prior work \cite{BH09} we proposed $p$-spectral clustering based on the graph $p$-Laplacian, a 
nonlinear operator on graphs which reduces to the standard graph Laplacian for $p=2$. For $p$ close to one, we obtained much better cuts than standard spectral
clustering, at the cost of higher runtime.
Using the new IPM, we efficiently compute eigenvectors of the $1$-Laplacian for $1$-spectral clustering. Similar to the recent work of \cite{SB10},  
we improve considerably compared to \cite{BH09} both in terms of runtime and the achieved Cheeger cuts. However, opposed to the suggested method in \cite{SB10} our IPM is guaranteed
to converge to an eigenvector of the $1$-Laplacian.

The second application is sparse Principal Component Analysis (PCA). The motivation for sparse PCA is that the largest PCA component is difficult to interpret as usually all components are nonzero.
In order to allow a direct interpretation it is therefore desirable to have only a few features with nonzero components but which still explain most of the variance. This kind of trade-off has
been widely studied in recent years, see \cite{Jou10} and references therein. We show that also sparse PCA has a natural formulation as a nonlinear eigenvalue problem and can be efficiently
solved with the IPM.

\section{Nonlinear Eigenproblems}\label{sec:nonlinear_eigenproblems}

The standard eigenproblem for a symmetric matric $A \in \R^{n\times n}$ is of the form
\begin{equation}\label{eq:linear_eigenproblem}
	A f - \lambda f = 0 , 
\end{equation}
where $f\in \R^n$ and $\lambda \in \R$. It is a well-known result from linear algebra that for symmetric matrices $A$, the eigenvectors of $A$  
can be characterized as critical points of the functional 
\begin{equation}\label{eq:StdFct}
	F_{\mathrm{Standard}}(f)=\frac{\inner{f,Af}}{\norm{f}^2_2}.
\end{equation}
The eigenvectors of $A$ can be computed using the Courant-Fischer Min-Max principle. While the ratio of quadratic functions is useful in several applications, it is a severe modeling restriction. This restriction however can be overcome using nonlinear eigenproblems.
In this paper we consider functionals $F$ of the form
\begin{equation}\label{eq:GenFct}
	F(f) = \frac{R(f)}{S(f)},
\end{equation}
where with $\R_+=\{x \in \R \,|\, x \geq 0\}$ we assume $R:\R^n \rightarrow \R_+$, $S:\R^n \rightarrow \R_+$ to be convex, Lipschitz continuous, even and positively $p$-homogeneous\footnote{A function $G:\R^n \rightarrow \R$ is positively homogeneous of degree $p$ if
$G(\gamma x)=\gamma^p G(x)$ for all $\gamma \geq 0$.} with $p\geq 1$. Moreover, we assume that $S(f)=0$ if and only if $f=0$. 
The condition that $R$ and $S$ are $p$-homogeneous and even will imply for any eigenvector $v$ that also $\alpha v$ for $\alpha \in \R$ is an eigenvector. 
It is easy to see that the functional of the standard eigenvalue problem in Equation \eqref{eq:StdFct} is a special case of the general functional in \eqref{eq:GenFct}.
	
To gain some intuition, let us first consider the case where $R$ and $S$ are differentiable. Then it holds for every critical point $f^*$ of $F$,
\[
	 \nabla F(f^*)= 0 \qquad \Longleftrightarrow \qquad \nabla R(f^*) - \frac{R(f^*)}{S(f^*)} \cdot \nabla S(f^*) = 0 \ .
\]
Let $r,s:\R^n \rightarrow \R^n$ be the operators defined as $r(f) = \nabla R(f)$, $s(f) = \nabla S(f)$ and $\lambda^*= \frac{R(f^*)}{S(f^*)}$, we see that every critical point $f^*$
of $F$ satisfies the \emph{nonlinear eigenproblem} 
\begin{equation}\label{eq:NonlinearEG}
	r(f^*) - \lambda^*\, s(f^*) = 0,
\end{equation}
which is in general a system of nonlinear equations, as $r$ and $s$ are nonlinear operators. If $R$ and $S$ are both quadratic, $r$ and $s$ are linear operators
and one gets back the standard eigenproblem \eqref{eq:linear_eigenproblem}.

Before we proceed to the general nondifferentiable case, we have to introduce some important concepts from nonsmooth analysis. Note that $F$ is in general nonconvex and nondifferentiable. In the following we denote by $\partial F(f)$ the \emph{generalized gradient} of $F$ at $f$ according to Clarke \cite{Cla83},
\[ \partial F(f) = \{ \xi \in \R^n \; \big| \; F^0(f,v) \, \geq \, \inner{\xi,v}, \quad \textrm{ for all } v \in \R^n\},\]
where $F^0(f,v)= \lim_{g \rightarrow f, \,t\rightarrow 0} \sup \frac{F(g+tv) - F(g)}{t}$. In the case where $F$ is convex, $\partial F$ is the subdifferential of $F$ and $F^0(f,v)$ the 
directional derivative for each $v \in \R^n$. A characterization of critical points of nonsmooth functionals is as follows.
\begin{definition}[\cite{Cha81}]
A point $f \in \R^n$ is a critical point of $F$, if $0 \in \partial F$.
\end{definition} 
This generalizes the well-known fact that the gradient of a differentiable function vanishes at a critical point.
We now show that the nonlinear eigenproblem \eqref{eq:NonlinearEG} is a necessary condition for a critical point and in some
cases even sufficient. A useful tool is the generalized Euler identity.
\begin{theorem}[\cite{YW08}]\label{th:EulerId}
Let $R:\R^n \rightarrow \R$ be a positively $p$-homogeneous and convex continuous function. 
Then, for each $x \in \R^n$ and $r^* \in \partial R(x)$ it holds that
$\inner{x,r^*}=p\,R(x)$.
\end{theorem}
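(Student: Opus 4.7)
The plan is to exploit the $p$-homogeneity along the ray $\gamma \mapsto \gamma x$ together with the subgradient inequality, which is available since $R$ is convex (and hence $\partial R$ in Clarke's sense coincides with the convex subdifferential).

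Concretely, fix $x \in \R^n$ and $r^* \in \partial R(x)$. The convex subgradient inequality gives
\[
R(y) \;\geq\; R(x) + \inner{r^*, y - x} \qquad \text{for all } y \in \R^n.
\]
Specializing to $y = \gamma x$ with $\gamma \geq 0$ and using $R(\gamma x) = \gamma^p R(x)$, I obtain
\[
(\gamma^p - 1)\, R(x) \;\geq\; (\gamma - 1)\, \inner{r^*, x}.
\]

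The next step is to divide by $\gamma - 1$, treating the two sides of $\gamma = 1$ separately. For $\gamma > 1$ the factor $\gamma - 1$ is positive, so dividing and letting $\gamma \downarrow 1$ (and using $\lim_{\gamma \to 1} \frac{\gamma^p - 1}{\gamma - 1} = p$) yields $p\,R(x) \geq \inner{r^*, x}$. For $0 \leq \gamma < 1$ the factor $\gamma - 1$ is negative, so the inequality flips; letting $\gamma \uparrow 1$ produces the reverse bound $p\,R(x) \leq \inner{r^*, x}$. Combining the two gives the identity $\inner{x, r^*} = p\, R(x)$.

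The only delicate point is to make sure the limits $\gamma \downarrow 1$ and $\gamma \uparrow 1$ are both admissible: this is where continuity of $R$ is used (so that the subgradient inequality extends to $\gamma = 0$ and nothing degenerates) and where positive homogeneity — as opposed to two-sided homogeneity — is sufficient, since we only ever plug in $\gamma \geq 0$. Everything else is a routine limit computation, and I do not anticipate any real obstacle beyond keeping track of the sign of $\gamma - 1$ when dividing.
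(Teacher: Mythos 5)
Your proof is correct. The paper itself does not prove this theorem but imports it from \cite{YW08}, so there is no internal argument to compare against; your derivation --- applying the convex subgradient inequality along the ray $y=\gamma x$, using $R(\gamma x)=\gamma^p R(x)$, and taking one-sided limits $\gamma\downarrow 1$ and $\gamma\uparrow 1$ to squeeze $\inner{x,r^*}=p\,R(x)$ --- is the standard and fully valid route, and it is the same scaling trick the paper uses later in Lemma 3.2. One small remark: the appeal to continuity is superfluous (a finite convex function on $\R^n$ is automatically continuous and locally Lipschitz, which is also why the Clarke gradient coincides with the convex subdifferential here), and you never actually need the inequality at $\gamma=0$, only at $\gamma$ near $1$.
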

The next theorem characterizes the relation between nonlinear eigenvectors and critical points of $F$.
\begin{theorem}\label{th:CriticalPoint}
Suppose that $R,S$ fulfill the stated conditions. Then a necessary condition for $f^*$ being a critical point of $F$ is
\begin{equation}\label{eq:GenNonlinearEG} 
 0 \in \partial R(f^*) - \lambda^*\, \partial S(f^*) , \qquad \text{where } \qquad \lambda^* = \frac{R(f^*)}{S(f^*)}.
\end{equation}
If $S$ is continuously differentiable at $f^*$, then this is also sufficient. 
\end{theorem}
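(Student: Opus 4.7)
The plan is to derive the stated eigenvalue inclusion by applying the Clarke subdifferential calculus (quotient/product rule) to $F=R/S$. First observe that any critical point $f^*$ must be nonzero: by the assumption $S(f)=0 \iff f=0$ combined with positive $p$-homogeneity, the ratio $R/S$ is only well-defined on $\R^n\setminus\{0\}$, where $S(f^*)>0$. Since $R$ and $S$ are convex and Lipschitz, Clarke's generalized gradient on each coincides with the convex subdifferential, so all the usual convex-analysis rules apply.

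For the necessary direction, I would invoke the Clarke quotient rule for locally Lipschitz functions at a point where the denominator is nonzero:
\[
  \partial F(f^*) \;\subseteq\; \frac{1}{S(f^*)^2}\Bigl(S(f^*)\,\partial R(f^*)\;-\;R(f^*)\,\partial S(f^*)\Bigr).
\]
If $0\in\partial F(f^*)$, this inclusion guarantees the existence of $r^*\in\partial R(f^*)$ and $s^*\in\partial S(f^*)$ with $S(f^*)\,r^*-R(f^*)\,s^*=0$. Dividing by $S(f^*)>0$ and recalling $\lambda^*=R(f^*)/S(f^*)$ yields $r^*=\lambda^* s^*$, i.e., $0\in\partial R(f^*)-\lambda^*\,\partial S(f^*)$. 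As a consistency check one can pair both sides with $f^*$ and use the generalized Euler identity (Theorem~\ref{th:EulerId}) to recover $pR(f^*)=\lambda^* p S(f^*)$, which pins down the value of $\lambda^*$.

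For the sufficient direction, assume $S$ is $C^1$ at $f^*$. Then $\partial S(f^*)=\{\nabla S(f^*)\}$. Writing $F=R\cdot(1/S)$, the factor $1/S$ is $C^1$ near $f^*$ with gradient $-\nabla S(f^*)/S(f^*)^2$, and Clarke's product rule becomes an equality whenever one factor is $C^1$ (since strictly differentiable functions are Clarke-regular and their generalized gradient is a singleton). Hence
\[
  \partial F(f^*) \;=\; \frac{1}{S(f^*)^2}\Bigl(S(f^*)\,\partial R(f^*)\;-\;R(f^*)\,\nabla S(f^*)\Bigr),
\]
and the hypothesis $\lambda^*\nabla S(f^*)\in\partial R(f^*)$ immediately produces a zero element on the right, so $0\in\partial F(f^*)$ and $f^*$ is critical.

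The main technical obstacle is being careful about when Clarke's quotient/product rule yields an equality versus only an inclusion: in the general nonsmooth case we only get the inclusion (which suffices for necessity), whereas the equality needed for sufficiency must be justified by the regularity furnished by $C^1$-differentiability of $S$ at $f^*$. Once that distinction is handled cleanly, the rest amounts to scalar rearrangement by the positive factor $S(f^*)$.
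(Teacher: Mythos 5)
Your proposal is correct and follows essentially the same route as the paper: both establish necessity via the inclusion form of Clarke's quotient rule (Prop.~2.3.14 in \cite{Cla83}) at the nonzero point $f^*$, and obtain sufficiency from the fact that this calculus rule becomes an equality when $S$ is continuously differentiable at $f^*$, with the Euler identity of Theorem~\ref{th:EulerId} pinning down $\lambda^*=R(f^*)/S(f^*)$. Your phrasing of the sufficiency step through the product rule for $R\cdot(1/S)$ rather than the equality case of the quotient rule is only a cosmetic variation.
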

\begin{proof}
Let $f^*$ fulfill the general nonlinear eigenproblem in \eqref{eq:GenNonlinearEG}, where $r^* \in \partial R(f^*), s^* \in \partial S(f^*)$, such that $r^*-\lambda^*\, s^*=0$. Then by Theorem \ref{th:EulerId},
\[ 0=\inner{f^*,r^*} - \lambda^* \inner{f^*,s^*} = p\,R(f^*) - p\,\lambda^*\, S(f^*),\]
and thus $\lambda^* = R(f^*)/S(f^*)$. As $R,S$ are Lipschitz continuous, we have, see Prop. 2.3.14 in \cite{Cla83},
\begin{equation}\label{eq:SubGradRatio} \partial \Big(\frac{R}{S}\Big)(f) \; \subseteq \; \frac{S(f)\, \partial R(f) - R(f)\,\partial S(f)}{S(f)^2}.\end{equation}
Thus if $f^*$ is a critical point, that is $0 \in \partial F(f^*)$, then $0 \in \partial R(f^*) - \frac{R(f^*)}{S(f^*)}\,\partial S(f^*)$
given that $f^* \neq 0$. Moreover, by Prop. 2.3.14 in \cite{Cla83} we have equality in \eqref{eq:SubGradRatio}, if $S$ is continuously differentiable at $f^*$ and
thus \eqref{eq:GenNonlinearEG} implies that $f^*$ is a critical point of $F$.
\end{proof}

Finally, the definition of the associated nonlinear operators in the nonsmooth case is a bit tricky as $r$ and $s$ can be set-valued. However, as we assume $R$ and $S$ to
be Lipschitz, the set where $R$ and $S$ are nondifferentiable has measure zero and thus $r$ and $s$ are single-valued almost everywhere.

\section{The inverse power method for nonlinear Eigenproblems}\label{sec:algorithm}

A standard technique to obtain the smallest eigenvalue of a positive semi-definite symmetric matrix $A$ is the inverse power method \cite{Gol96}. 
Its main building block is the fact that the iterative scheme
\begin{equation}\label{eq:invpow_linear}
	 A f^{k+1} = f^k 
\end{equation}
converges to the smallest eigenvector of $A$. Transforming \eqref{eq:invpow_linear} into the optimization problem
\begin{equation}\label{eq:invpow_linear_opt}
 f^{k+1}=\argmin_{u} \frac{1}{2} \inner{u,A \,u} - \inner{u,f^k}
\end{equation}
is the motivation for the general IPM. The direct generalization tries to solve
\begin{equation}\label{eq:NonlinearEG2}
	0 \in r(f^{k+1}) - s(f^k) \quad \textrm{ or equivalently }\quad f^{k+1}=\argmin_{u} R(u) - \inner{u,s(f^k)},
\end{equation}
where $r(f) \in \partial R(f)$ and $s(f) \in \partial S(f)$. For $p>1$ this leads directly to Algorithm \ref{alg:general}, however for $p=1$ the direct generalization fails.
In particular, the ball constraint has to be introduced in Algorithm \ref{alg:P1} as the objective in the optimization problem \eqref{eq:NonlinearEG2} is otherwise unbounded from below. (Note that the 2-norm is only chosen for algorithmic convenience). Moreover, the introduction of $\lambda_k$ in Algorithm \ref{alg:P1} is necessary to guarantee descent whereas in Algorithm \ref{alg:general} it would just yield a rescaled solution of the problem in the inner loop (called inner problem in the following).

For both methods we show convergence to a solution of \eqref{eq:NonlinearEG}, which  by Theorem \ref{th:CriticalPoint} is a necessary condition for a critical point of $F$ and often also sufficient. Interestingly, both applications are naturally formulated as $1$-homogeneous problems so that we use in both cases Algorithm \ref{alg:P1}. Nevertheless, we state the second algorithm for completeness. Note that we cannot guarantee convergence to the smallest eigenvector even though our experiments suggest that we often do so. However, as the method is fast one can afford to run it multiple times with different initializations and use the eigenvector with smallest eigenvalue. 

\begin{algorithm}[htb]
   \caption{Computing a nonlinear eigenvector for convex positively $p$-homogeneous functions $R$ and $S$ with $p=1$}
   \label{alg:P1}
\begin{algorithmic}[1]
   \STATE {\bfseries Initialization:} $f^0 = \text{random}$ with $\norm{f^0} = 1$, $\lambda^0 =F(f^0)$
   \REPEAT
   \STATE $f^{k+1} = \argmin_{\norm{u}_2 \leq 1} \left\{ R(u) - \lambda^k \inner{u, s(f^k)}    \right\}$ 
   			\hspace{1cm} \text{where\ } $s(f^k) \in \partial S(f^k)$
   \STATE $\lambda^{k+1}= R(f^{k+1})/S(f^{k+1})$
	\UNTIL $\frac{\abs{\lambda^{k+1}-\lambda^k}}{\lambda^k}< \epsilon$
	\STATE {\bfseries Output:} eigenvalue $\lambda^{k+1}$ and eigenvector $f^{k+1}$.
\end{algorithmic}
\end{algorithm}

\begin{algorithm}[htb]
   \caption{Computing a nonlinear eigenvector for convex positively $p$-homogeneous functions $R$ and $S$ with $p>1$}
   \label{alg:general}
\begin{algorithmic}[1]
   \STATE {\bfseries Initialization:} $f^0 = \text{random}$, $\lambda^0 =F(f^0)$
   \REPEAT
   \STATE $g^{k+1} = \argmin_{u} \left\{ R(u) - \inner{u, s(f^k)}    \right\}$ 
   			\hspace{1cm} \text{where\ } $s(f^k) \in \partial S(f^k)$
   \STATE $f^{k+1} = g^{k+1}/S(g^{k+1})^{1/p}$
   \STATE $\lambda^{k+1}= R(f^{k+1})/S(f^{k+1})$
	\UNTIL $\frac{\abs{\lambda^{k+1}-\lambda^k}}{\lambda^k}< \epsilon$
	\STATE {\bfseries Output:} eigenvalue $\lambda^{k+1}$ and eigenvector $f^{k+1}$.
\end{algorithmic}
\end{algorithm}

The inner optimization problem is convex for both algorithms.
In turns out that both for $1$-spectral clustering and sparse PCA the inner problem can be solved very efficiently, for sparse PCA it has even a closed form solution. 
While we do not yet have results about convergence speed,  empirical observation shows that one usually converges quite quickly to an eigenvector.

To our best knowledge both suggested methods have not been considered before. In \cite{BieErcMar09} they propose an inverse power method specially tailored towards
the continuous $p$-Laplacian for $p>1$, which can be seen as a special case of Algorithm \ref{alg:general}. In \cite{Jou10} a generalized power method has been proposed
which will be discussed in Section \ref{sec:app_sparse_pca}.
Finally, both methods can be easily adapted to compute the largest nonlinear eigenvalue, which however we have to omit due to space constraints.

\begin{lemma}\label{le:monotony_funct}
The sequences $f^k$ produced by Alg.~\ref{alg:P1} and \ref{alg:general} satisfy $F(f^k) > F(f^{k+1})$ for all $k\geq 0$ or the sequences terminate.
\end{lemma}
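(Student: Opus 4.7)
The plan is to treat the two algorithms separately, since the mechanism for $p=1$ (Algorithm~\ref{alg:P1}) differs from that for $p>1$ (Algorithm~\ref{alg:general}), although in both cases the decrease combines (i) the optimality condition in the inner problem, (ii) the Euler identity of Theorem~\ref{th:EulerId}, and (iii) convexity/subgradient inequalities for $R$ and $S$.

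For Algorithm~\ref{alg:P1}, the feasible test point $f^k/\|f^k\|_2$ is the key: by $1$-homogeneity of $R$ and Euler ($\langle f^k, s(f^k)\rangle = S(f^k)$ for $p=1$), the inner objective at this point equals $(R(f^k)-\lambda^k S(f^k))/\|f^k\|_2 = 0$, so the optimality of $f^{k+1}$ gives $R(f^{k+1}) \leq \lambda^k \langle s(f^k), f^{k+1}\rangle$. The subgradient inequality for $S$ at $f^k$, reduced again via Euler, yields $\langle s(f^k), f^{k+1}\rangle \leq S(f^{k+1})$. Combining, $R(f^{k+1}) \leq \lambda^k S(f^{k+1})$, i.e.\ $\lambda^{k+1} \leq \lambda^k$. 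Strict inequality fails only when both inequalities are attained with equality, in which case $\lambda^{k+1}=\lambda^k$ and the stopping criterion fires.

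For Algorithm~\ref{alg:general}, optimality of $g^{k+1}$ gives $s(f^k)\in\partial R(g^{k+1})$, and Euler then yields $\langle g^{k+1}, s(f^k)\rangle = pR(g^{k+1})$ and, via Fenchel--Young equality, $R^*(s(f^k)) = (p-1)R(g^{k+1})$; analogously $S^*(s(f^k)) = (p-1)S(f^k)$. The main obstacle is to turn these pointwise identities into a multiplicative descent estimate on $R/S$. The tool I would use is the sharpened Fenchel--Young inequality
\[
  \langle u, v\rangle \;\leq\; (pR(u))^{1/p}\,(qR^*(v))^{1/q}, \qquad \tfrac{1}{p}+\tfrac{1}{q}=1,
\]
obtained by minimizing the $t$-family of bounds $\langle u, v\rangle \leq t^p R(u) + t^{-q}R^*(v)$ over $t>0$ (using that $R$ is $p$-homogeneous and $R^*$ is $q$-homogeneous). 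Applied to $R$ at $(f^k, s(f^k))$ it produces $S(f^k)^p \leq R(f^k)\,R(g^{k+1})^{p-1}$, and applied to $S$ at $(g^{k+1}, s(f^k))$ it produces $R(g^{k+1})^p \leq S(g^{k+1})\,S(f^k)^{p-1}$. Multiplying the two and dividing by $(S(f^k)R(g^{k+1}))^{p-1}$ collapses to $S(f^k)R(g^{k+1}) \leq R(f^k)S(g^{k+1})$, i.e.\ $\lambda^{k+1}\leq\lambda^k$; since $F(f^{k+1}) = R(g^{k+1})/S(g^{k+1})$ by the $p$-homogeneity of the rescaling step, we conclude $F(f^{k+1})\leq F(f^k)$.

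The delicate part is the equality analysis. A non-strict step requires equality in both sharpened Fenchel--Young bounds; equality in the first forces $s(f^k)$ to be a positive scalar multiple of some element of $\partial R(f^k)$, and combined with $s(f^k)\in\partial S(f^k)$ this is exactly the nonlinear eigenproblem~\eqref{eq:GenNonlinearEG} at $f^k$. By $p$-homogeneity of the subgradient relation, $g^{k+1}$ is then a positive multiple of $f^k$, so $f^{k+1}$ is a rescaling of $f^k$ and $\lambda^{k+1}=\lambda^k$, triggering the termination criterion $|\lambda^{k+1}-\lambda^k|/\lambda^k<\epsilon$.
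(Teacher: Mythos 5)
Your proof is correct, and it splits exactly as the paper does into the $p=1$ and $p>1$ cases. For Algorithm~\ref{alg:P1} your argument is essentially identical to the paper's: test the inner objective at (a normalization of) $f^k$, use Theorem~\ref{th:EulerId} to see it vanishes there, and combine the resulting bound $R(f^{k+1})\leq\lambda^k\inner{f^{k+1},s(f^k)}$ with the subgradient inequality $\inner{f^{k+1},s(f^k)}\leq S(f^{k+1})$.

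For Algorithm~\ref{alg:general} you take a genuinely different route. The paper minimizes $\Psi_{f^k}$ along rays $tu$, compares against the reference point $\hat{f}=F(f^k)^{1/(1-p)}f^k$, and invokes Proposition~\ref{lemma:HoelderIneq} (proved there via a cyclic-monotonicity construction in the style of Zarantonello) to pass from $\inner{u,s(f^k)}$ to $S(u)^{1/p}$. You instead read off $s(f^k)\in\partial R(g^{k+1})$ from first-order optimality, convert the Euler identities into the Fenchel--Young equalities $R^*(s(f^k))=(p-1)R(g^{k+1})$ and $S^*(s(f^k))=(p-1)S(f^k)$, and close the loop with a sharpened Fenchel--Young inequality $\inner{u,v}\leq(pR(u))^{1/p}(qR^*(v))^{1/q}$ obtained by optimizing the scaling $t$. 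That inequality is mathematically equivalent to Proposition~\ref{lemma:HoelderIneq}, but your derivation of it (one-parameter scaling of the ordinary Fenchel--Young bound) is shorter and arguably more transparent than the paper's $2m$-point telescoping argument, and your two applications multiply out to $S(f^k)R(g^{k+1})\leq R(f^k)S(g^{k+1})$ very cleanly; you also avoid needing the normalization $S(f^k)=1$. Your equality analysis (equality forces $s(f^k)$ to be a positive multiple of an element of $\partial R(f^k)$, hence $f^k$ already solves the eigenproblem) is sound and matches where the paper's termination branch ends up. The one thing you give up relative to the paper's argument is that yours hinges on the exact optimality condition $s(f^k)\in\partial R(g^{k+1})$, whereas the paper's comparison against $\hat{f}$ shows descent for \emph{any} $u$ with $\Psi_{f^k}(u)<\Psi_{f^k}(\hat{f})$, a stronger statement the authors exploit later to justify solving the inner problem only approximately; this does not affect the lemma as stated.
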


\begin{theorem}\label{th:convergence_eigenvector}
The sequences $f^k$ produced by Algorithms~\ref{alg:P1} and \ref{alg:general} converge to an eigenvector $f^*$ with eigenvalue $\lambda^* \in \left[ 0, F(f^0)\right]$ in the sense
that it solves the nonlinear eigenproblem \eqref{eq:GenNonlinearEG}. If $S$ is continuously differentiable at $f^*$, then $F$ has a critical point at $f^*$.
\end{theorem}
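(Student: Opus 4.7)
The plan is to combine the monotone descent from Lemma~\ref{le:monotony_funct} with compactness, then pass to the limit in the KKT conditions of the inner problem. I focus on Algorithm~\ref{alg:P1} ($p=1$); the argument for Algorithm~\ref{alg:general} is analogous and in fact simpler because its inner problem is unconstrained and the rescaling step automatically places the iterates on the compact level set $\{S=1\}$.

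First, Lemma~\ref{le:monotony_funct} together with $R,S\geq 0$ implies that $\lambda^k = F(f^k)$ is non-increasing and bounded below by $0$, hence converges to some $\lambda^* \in [0, F(f^0)]$ with $\lambda^k - \lambda^{k+1} \to 0$. The ball constraint $\norm{f^k}_2 \leq 1$ (respectively the compact level set $\{S=1\}$ for Algorithm~\ref{alg:general}, using continuity, positive $p$-homogeneity and $S(f)=0 \Leftrightarrow f=0$) keeps the iterates bounded, so Bolzano--Weierstrass yields a subsequence $f^{k_j} \to f^*$. Since $R,S$ are Lipschitz, the Clarke subdifferentials $\partial R(f)$ and $\partial S(f)$ are uniformly bounded, and by upper semi-continuity of $\partial S$ \cite{Cla83} a further subsequence gives $s(f^{k_j}) \to s^* \in \partial S(f^*)$.

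Next, I would unpack Lemma~\ref{le:monotony_funct} into its natural two-step chain. Optimality of $f^{k+1}$ in the inner problem (with $f^k$ feasible because $\norm{f^k}_2 \leq 1$) combined with the Euler identity Theorem~\ref{th:EulerId} at $f^k$ gives $R(f^{k+1}) - \lambda^k\inner{f^{k+1}, s(f^k)} \leq R(f^k) - \lambda^k \inner{f^k, s(f^k)} = 0$, while convexity of $S$ combined with Euler yields $\inner{f^{k+1}, s(f^k)} \leq S(f^{k+1})$. Together these produce $\lambda^{k+1} \leq \lambda^k$, and since the gap tends to zero both inequalities must tighten in the limit; in particular $R(f^*) = \lambda^*\inner{f^*, s^*}$. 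The KKT conditions of the inner problem supply multipliers $\mu^k \geq 0$ with $\mu^k(\norm{f^{k+1}}_2^2 - 1)=0$ and $0 \in \partial R(f^{k+1}) - \lambda^k s(f^k) + \mu^k f^{k+1}$; boundedness of $\partial R$ keeps $\mu^k$ bounded, so along a sub-subsequence we pass to $0 \in \partial R(f^*) - \lambda^* s^* + \mu^* f^*$. Taking inner product with $f^*$ and applying Theorem~\ref{th:EulerId} gives $R(f^*) - \lambda^* S(f^*) + \mu^* \norm{f^*}_2^2 = 0$; since $\lambda^* = R(f^*)/S(f^*)$ the first two terms cancel and we are forced to $\mu^* = 0$, yielding $0 \in \partial R(f^*) - \lambda^* \partial S(f^*)$. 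The critical-point claim under smoothness of $S$ at $f^*$ then follows directly from Theorem~\ref{th:CriticalPoint}.

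The hard part is the elimination of $\mu^*$: it requires $f^* \neq 0$, which in turn requires a uniform lower bound on $\norm{f^{k+1}}_2$. I would obtain this by exhibiting an explicit feasible descent direction in the inner problem (for instance a suitably scaled multiple of $s(f^k)$) whenever $\lambda^k > 0$ and $S(f^k) > 0$, so that the optimum is strictly negative and the minimizer stays bounded away from the origin. A secondary subtlety is that ``convergence'' of the full sequence $f^k$ is meant in the subsequential sense --- without additional structure one only obtains that every cluster point solves~\eqref{eq:GenNonlinearEG}, but since $\lambda^k$ does converge to a single value, this is the natural reading of the statement.
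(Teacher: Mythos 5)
Your route---introduce KKT multipliers for the ball-constrained inner problem, pass to the limit, and kill the multiplier $\mu^*$ with the Euler identity---is genuinely different from the paper's. The paper never needs a constraint qualification or multipliers: it first shows $\min_{\norm{f}_2\leq 1}\Phi_{f^*}(f)=0$ by a descent/contradiction argument, then exploits the positive $1$-homogeneity $\Phi_{f^*}(\alpha f)=\alpha\,\Phi_{f^*}(f)$ to conclude that $\Phi_{f^*}\geq 0$ on all of $\R^n$, so that $f^*$ (where $\Phi_{f^*}(f^*)=0$) is an \emph{unconstrained} global minimizer of the convex function $\Phi_{f^*}$ and $0\in\partial\Phi_{f^*}(f^*)=\partial R(f^*)-\lambda^*\partial S(f^*)$ drops out immediately. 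Your plan can probably be completed, but as written it has two concrete gaps.

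First, the stationarity condition you pass to the limit couples \emph{two consecutive iterates}: $0\in\partial R(f^{k+1})-\lambda^k s(f^k)+\mu^k f^{k+1}$. Choosing a subsequence with $f^{k_j}\to f^*$ does not make $f^{k_j+1}$ converge to $f^*$; along a further subsequence it converges to some possibly different cluster point $\tilde f$, and the limit relation reads $0\in\partial R(\tilde f)-\lambda^* s^*+\mu^*\tilde f$ with $s^*\in\partial S(f^*)$ --- which is the eigenproblem at neither point. Your line ``$R(f^*)=\lambda^*\inner{f^*,s^*}$'' silently identifies $\tilde f$ with $f^*$. This is repairable (from convexity and the Euler identity one gets $S(y)\geq\inner{s^*,y}$ for all $y$, and the tightened inequality $\inner{\tilde f,s^*}=S(\tilde f)$ then forces $s^*\in\partial S(\tilde f)$), but the repair has to be stated. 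Second, your mechanism for $f^*\neq 0$ does not work as proposed: since $\lambda^k-\lambda^{k+1}\to 0$, the optimal values of the inner problems tend to $0$, so ``optimum strictly negative'' yields no \emph{uniform} lower bound on $\norm{f^{k+1}}_2$. The correct and much simpler observation, already used in the paper's proof of Lemma~\ref{le:monotony_funct}, is again $1$-homogeneity: whenever the minimum of $\Phi_{f^k}$ over the ball is negative, every minimizer lies exactly on the unit sphere, so $\norm{f^k}_2=1$ for all $k\geq 1$ and hence $\norm{f^*}_2=1$. Finally, dismissing Algorithm~\ref{alg:general} as ``analogous and in fact simpler'' undersells it: the paper's argument for $p>1$ hinges on the comparison point $\hat f=F(f^k)^{\frac{1}{1-p}}f^k$ and the H\"older-type inequality of Proposition~\ref{lemma:HoelderIneq}, neither of which has a counterpart in your $p=1$ sketch.
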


Throughout the proofs, we use the notation $\Phi_{f^k}(u) = R(u) - \lambda^k \inner{u, s(f^k)}$ and \ $\Psi_{f^k}(u) = R(u) - \inner{u, s(f^k)}$ for the objectives of the inner problems in Algorithms \ref{alg:P1} \& \ref{alg:general}, respectively.

\begin{proof2}{of Lemma~\ref{le:monotony_funct} for Algorithm~\ref{alg:P1}}
First note that the optimal value of the inner problem is non-positive as $\Phi_{f^k}(0)=0$. Moreover, as $\Phi_{f^k}$ is $1$-homogeneous, the 
minimum of $\Phi_{f^k}$ is always attained at the boundary of the constraint set.
Thus any $f^k$ fulfills $\norm{f^k}_2^2=1$ and thus is feasible, and 
\[
	\min_{\norm{f}_2^2 \leq 1} {\Phi_{f^k}(f)} \leq \Phi_{f^k}(f^k) = R(f^k) - \lambda^k\inner{f^k, s(f^k)} = R(f^k) - F(f^k)\cdot S(f^k) = 0 \ ,
\]
where we used $\inner{f^k, s(f^k)} = S(f^k)$ 
from Theorem \ref{th:EulerId}. If the optimal value is zero, then $f^k$ is a possible minimizer and the sequence terminates and $f^k$
is an eigenvector see proof of Theorem~\ref{th:convergence_eigenvector} for Algorithm~\ref{alg:P1}. Otherwise the optimal value is negative and
at the optimal point $f^{k+1}$ we get $R(f^{k+1}) < \lambda^k \inner{f^{k+1}, s(f^k)}$.
The definition of the subdifferential $s(f^k)$ together with the $1$-homogeneity of $S$ yields 
\[
	S(f^{k+1}) \geq S(f^k) + \inner{f^{k+1}-f^k, s(f^k)} = \inner{f^{k+1}, s(f^k)} \ ,
	\] 
and finally $F(f^{k+1})=\frac{R(f^{k+1})}{S(f^{k+1})} < \lambda^k = F(f^k)$.
\end{proof2}
\begin{proof2}{of Theorem~\ref{th:convergence_eigenvector} for Algorithm~\ref{alg:P1}}
By Lemma~\ref{le:monotony_funct} the sequence $F(f^k)$ is monotonically decreasing. By assumption $S$ and $R$ are nonnegative and hence $F$ is bounded below by zero. 
Thus we have convergence towards a limit
\[
	\lambda^* = \lim_{k \rightarrow \infty} F(f^k) \ .
\]
Note that $\norm{f^{k}}_2^2 \leq 1$ for every $k$, thus the sequence $f^k$ is contained in a compact set, which implies that there exists a subsequence $f^{k_j}$ converging to some element $f^*$. As the sequence $F(f^{k_j})$ is a subsequence of a convergent sequence, it has to converge towards the same limit, hence also
\[
	\lim_{j \rightarrow \infty} F(f^{k_j}) = \lambda^* \ .
\]
As shown before, the objective of the inner optimization problem is nonpositive at the optimal point. Assume now that $\min_{\norm{f}_2^2 \leq 1} \Phi_{f^*}(f)<0$. Then the vector $f^{**}  = \argmin_{\norm{f}_2^2 \leq 1} \Phi_{f^*}(f)$ satisfies
\[
	R(f^{**}) < \lambda^* \inner{f^{**},s(f^*)} = \lambda^*\left(S(f^*) + \inner{f^{**}-f^*,s(f^*)}\right) \leq \lambda^* S(f^{**}) \ ,
\]
where we used the definition of the subdifferential and the $1$-homogeneity of $S$. Hence
	\[
		F(f^{**}) < \lambda^* = F(f^*)\ ,
		\]
which is a contradiction to the fact that the sequence $F(f^k)$ has converged to $\lambda^*$. Thus we must have $\min_{\norm{f}_2^2 \leq 1} \Phi_{f^*}(f)=0$,	i.e. the function $\Phi_{f^*}(f)$ is nonnegative in the unit ball. Using the fact that for any $\alpha \geq 0$, 
\[
	\Phi_{f^*}(\alpha f) = \alpha \Phi_{f^*} \ ,
\]
we can even conclude that the function $\Phi_{f^*}(f)$ is nonnegative everywhere, and thus $\min_{f} \Phi_{f^*}(f)=0$. Note that $\Phi_{f^*}(f^*)=0$, which implies that $f^*$ is a global minimizer of $\Phi_{f^*}$, and hence
\[
	0 \in \partial \Phi_{f^*}(f^*) = \partial R(f^*) - \lambda^* \partial S(f^*) \ ,
\]
which implies that $f^*$ is an eigenvector with eigenvalue $\lambda^*$. Note that this argument was independent of the choice of the subsequence, thus every convergent subsequence converges to an eigenvector with the same eigenvalue $\lambda^*$. Clearly we have $\lambda^* \leq F(f^0)$.
\end{proof2}

The following lemma is useful in the convergence proof of Algorithm~\ref{alg:general}.
\begin{lemma}\label{lemma:homogeneity_subgradient}
Let $R$ be a convex, positively $p$-homogeneous function with $p\geq 1$. Then for any $x \in \R^n$, $t\geq 0$ and any $r^* \in \partial R(x)$ we have $t^{p-1} r^* \in  \partial R(tx)$.
\end{lemma}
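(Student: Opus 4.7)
The plan is to unpack the definition of the convex subdifferential and exploit positive $p$-homogeneity to rescale the defining inequality at $x$ into the defining inequality at $tx$. Since $R$ is convex, $r^* \in \partial R(x)$ is equivalent to the first-order condition $R(y) \ge R(x) + \inner{r^*, y-x}$ for every $y \in \R^n$, and the target $t^{p-1}r^* \in \partial R(tx)$ is the same statement with base point $tx$ and slope $t^{p-1}r^*$. So the whole task is to transform one inequality into the other.

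For the main case $t > 0$, I would take an arbitrary $y$, apply the subgradient inequality at $x$ with argument $y/t$, and multiply through by $t^p > 0$. Using positive $p$-homogeneity in the forms $t^p R(y/t) = R(y)$ and $t^p R(x) = R(tx)$, together with the linear identity $t^p\inner{r^*, y/t - x} = \inner{t^{p-1}r^*, y - tx}$, the rescaled inequality is exactly $R(y) \ge R(tx) + \inner{t^{p-1}r^*, y - tx}$. This is a one-line bookkeeping argument.

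The only real subtlety is the degenerate case $t = 0$. For $p > 1$ we have $t^{p-1} = 0$ and $tx = 0$, so the claim reduces to $0 \in \partial R(0)$, i.e.\ $R(y) \ge R(0) = 0$ for all $y$ (noting $R(0) = 0^p R(x) = 0$ by homogeneity). Here I would invoke the generalized Euler identity (Theorem~\ref{th:EulerId}) to write $\inner{r^*, x} = p\,R(x)$, then set $y = 0$ in the subgradient inequality at $x$ to get $0 = R(0) \ge R(x) - \inner{r^*, x} = (1-p)R(x)$, which for $p > 1$ forces $R(x) \ge 0$. For $p = 1$ (reading $t^{p-1}$ as $1$), Euler's identity simplifies the subgradient inequality at $x$ to $R(y) \ge \inner{r^*, y}$, which is precisely the subgradient inequality at $0$.

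There is no genuine obstacle: the core content is a single rescaling of the subgradient inequality using $p$-homogeneity, and the only care needed is in the boundary case $t=0$, where Theorem~\ref{th:EulerId} does the work.
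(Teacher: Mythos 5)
Your argument is correct and is essentially the paper's own proof: the paper likewise multiplies the subgradient inequality at $x$ by $t^p$ and uses $p$-homogeneity to read the result as the subgradient inequality at $tx$ (with $ty$ playing the role of your $y$). Your separate treatment of $t=0$ is extra care the paper omits; just note that for $p>1$ the conclusion $0\in\partial R(0)$ requires $R(y)\ge 0$ for \emph{all} $y$, which follows by running your Euler-identity step at each point $y$ (or from the paper's standing assumption $R\ge 0$), not only at the given $x$.
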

\begin{proof}
Using the definition of the subgradient, we have for any $y \in \R^n$ and any $t\geq 0$,
\[ t^p R(y) \geq t^p R(x) + t^p \inner{r^*, y-x}  \ . \]
Using the $p$-homogeneity of $R$, we can rewrite this as
\[
	R(ty) \geq R(tx) +  \inner{t^{p-1}r^*, ty-tx} \ ,
\]
which implies $t^{p-1} r^* \in \partial R(tx)$.
\end{proof}

The following Proposition generalizes a result by Zarantonello \cite{Zar76}. 
\begin{proposition}\label{lemma:HoelderIneq}
Let $R:\R^n \rightarrow \R$ be a convex, continuous and positively $p$-homogeneous and even functional and $\partial R(f)$ its subdifferential at $f$. Then it holds for any $f,g \in R^n$ and $r(f)\in \partial R(f)$,
\[
	\abs{\inner{r(f),g}} \leq \inner{r(f),f}^{1-\frac{1}{p}} \inner{r(g),g}^{\frac{1}{p}} = p \cdot R(f)^{1-\frac{1}{p}} R(g)^{\frac{1}{p}} \ .
\]
\end{proposition}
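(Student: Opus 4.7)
The plan is to reduce the inequality to a one-parameter optimization via the subgradient inequality combined with $p$-homogeneity, and then handle the absolute value using the evenness of $R$. The equality between the two right-hand sides is immediate: by the generalized Euler identity (Theorem~\ref{th:EulerId}) we have $\inner{r(f),f}=p\,R(f)$ and $\inner{r(g),g}=p\,R(g)$, so $\inner{r(f),f}^{1-1/p}\inner{r(g),g}^{1/p}=p^{1-1/p}p^{1/p}R(f)^{1-1/p}R(g)^{1/p}=p\,R(f)^{1-1/p}R(g)^{1/p}$. Thus only the inequality needs to be proved.

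First I would apply the subgradient inequality of $R$ at $f$ to the point $tg$ for an arbitrary scalar $t\geq 0$:
\[
R(tg)\;\geq\; R(f)+\inner{r(f),tg-f}.
\]
Using positive $p$-homogeneity on the left-hand side and the Euler identity $\inner{r(f),f}=p R(f)$ on the right, this rearranges to
\[
t\inner{r(f),g}\;\leq\; t^{p}R(g)+(p-1)R(f)\qquad\text{for all }t\geq 0.
\]
The key step is to minimize the right-hand side over $t>0$. Assuming $\inner{r(f),g}>0$ and $R(g)>0$ (the degenerate cases are trivial), elementary calculus gives the optimal $t^{p-1}=\inner{r(f),g}/(p\,R(g))$, and substituting back yields
\[
\inner{r(f),g}^{p}\;\leq\; p^{p}\,R(f)^{p-1}R(g),
\]
i.e.\ $\inner{r(f),g}\leq p\,R(f)^{1-1/p}R(g)^{1/p}$.

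To obtain the inequality with $|\inner{r(f),g}|$ on the left, I would invoke the assumption that $R$ is even. Replacing $g$ by $-g$ in the bound just derived, and using $R(-g)=R(g)$, gives $-\inner{r(f),g}=\inner{r(f),-g}\leq p\,R(f)^{1-1/p}R(g)^{1/p}$, so both signs are controlled. Finally, the boundary case $p=1$ is slightly degenerate for the optimization argument but follows directly from the same inequality $\inner{r(f),g}\leq t R(g)+(1-1)R(f)=tR(g)$ valid for all $t\geq 0$, letting $t\downarrow 1$ and combining with the even-symmetry trick; this reproduces $|\inner{r(f),g}|\leq R(g)$, which is indeed $p\,R(f)^{0}R(g)^{1}$.

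The main technical nuisance is the one-parameter minimization and the need to check that the optimal $t$ is nonnegative (it is, since both $\inner{r(f),g}$ after the sign reduction and $R(g)$ are nonnegative) and to handle the edge cases $R(g)=0$ or $\inner{r(f),g}\leq 0$ separately, where the inequality is immediate. Apart from those bookkeeping points, the proof is a clean combination of the subgradient inequality, $p$-homogeneity, Euler's identity, and the evenness of $R$.
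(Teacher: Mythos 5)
Your proof is correct, and it takes a genuinely different --- and substantially shorter --- route than the paper's. The paper establishes the intermediate additive bound $\inner{r(f),g} \leq \left(1-\tfrac{1}{p}\right)\inner{r(f),f} + \tfrac{1}{p}\inner{r(g),g}$ via cyclic monotonicity of $\partial R$ applied to a cycle of $2m$ points along the rays through $f$ and $g$ (which requires its Lemma on the homogeneity of subgradients to identify explicit subgradients at the scaled points), followed by a Riemann-sum limit $m\to\infty$ giving $\int_0^1 j^{p-1}\,dj = \tfrac1p$; only then does it perform a scaling substitution and optimize over a scalar parameter to reach the product form. You obtain the same family of inequalities in one line: the subgradient inequality at $f$ evaluated at $tg$, combined with $p$-homogeneity and Euler's identity, gives $t\inner{r(f),g} \leq t^p R(g) + (p-1)R(f)$ for all $t\geq 0$ --- at $t=1$ this is exactly the paper's intermediate inequality --- and optimizing over $t$ yields the product bound directly, with no limiting argument and no auxiliary lemma. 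Both proofs finish identically, using evenness of $R$ to control $\inner{r(f),-g}$ and hence the absolute value, and your treatment of the edge cases ($p=1$, $R(g)=0$, $R(f)=0$, $\inner{r(f),g}\leq 0$) is adequate. What the paper's heavier machinery buys is essentially nothing here, since $r$ is a genuine subdifferential of a convex function (cyclic monotonicity is the natural tool only when one works with an abstract monotone operator, as in Zarantonello's original setting); your argument is the one I would recommend keeping.
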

\begin{proof}
First observe that for any  $k$ points $x_0, \dots x_{k-1} \in \R^n$, the subdifferential inequality yields
\begin{align*}
	R(x_l) &\geq R(x_{l-1}) + \inner{r(x_{l-1}),x_l-x_{l-1}} \ , \; \forall 1 \leq l \leq k-1\\
	R(x_0) &\geq R(x_{k-1}) + \inner{r(x_{k-1}),x_0-x_{k-1}} \ ,
\end{align*}
and hence, by summing up,
\begin{equation}\label{eq:cyclically_monotone}
		\inner{r(x_{0}),x_1-x_{0}} + \dots + \inner{r(x_{k-2}),x_{k-1}-x_{k-2}} + \inner{r(x_{k-1}),x_0-x_{k-1} } \leq 0 \ .
\end{equation}
Let now $f,g \in \R^n$, and $r(f) \in \partial R(f), r(g) \in \partial R(g)$. We construct a set of $2m$ points $x_0, \dots x_{2m-1}$ in $\R^n$, where $m\in \N$, as follows:
\begin{align*}
	x_i = \left\{ \begin{array}{ll}
							\frac{i+1}{m}  f & , \ 0 \leq i \leq m-1 \\
							\frac{2m-1-i}{m} g & , \ m \leq i \leq 2m-1 
	 \end{array}
	 \right. \ .
\end{align*}
By Lemma \ref{lemma:homogeneity_subgradient} for all $i\in \{0, \dots 2m-1 \}$ there exists an $r^*(x_i) \in \partial R(x_i)$ s.t.
\begin{align*}
	r^*(x_i) = \left\{ \begin{array}{ll}
							\left(\frac{i+1}{m}\right)^{p-1} r(f) & , \ 0 \leq i \leq m-1 \\
							\left(\frac{2m-1-i}{m}\right)^{p-1} r(g) & , \ m \leq i \leq 2m-1 
	 \end{array}
	 \right. \ .
\end{align*}
Eq. \eqref{eq:cyclically_monotone} now yields
\begin{align*}
\sum_{i=0}^{m-2}   \inner{\left(\frac{i+1}{m}\right)^{p-1}  r(f),\frac{1}{m} f} +  
  \inner{r(f),\frac{m-1}{m} g-f}  & \\
	-  \sum_{i=m}^{2m-2}  \inner{\left(\frac{2m-1-i}{m}\right)^{p-1} r(g),\frac{1}{m}g} 
	+    \inner{0 \cdot  r(g),\frac{1}{m} f- 0 \cdot g} &  \leq 0
\end{align*}
which simplifies to
\begin{align}\label{eq:hoelder_proof_ineq}
	\left(\frac{1}{m} \sum_{j=1}^{m-1} \left(\frac{j}{m}\right)^{p-1} - 1 \right) \inner{r(f),f}    
	- \left(\frac{1}{m} \sum_{j=1}^{m-1} \left(\frac{j}{m}\right)^{p-1}  \right) \inner{r(g),g} 
	 + \frac{m-1}{m} 	 \inner{r(f),g}   \leq 0 \ .  
	\end{align}
By letting $m\rightarrow \infty$ we obtain for the two sums
\[
	\lim_{m \rightarrow \infty} \left(\frac{1}{m} \sum_{j=1}^{m-1} \left(\frac{j}{m}\right)^{p-1} \right)
	= \lim_{m \rightarrow \infty} \left(\frac{1}{m} \sum_{j={\frac{1}{m},\frac{2}{m}, \dots}}^{\frac{m-1}{m}} j^{p-1} \right)
	= \int_0^1 j^{p-1} dj = \frac{1}{p} \ . 
\]
Hence in total in the limit $m \rightarrow \infty$ Eq. \eqref{eq:hoelder_proof_ineq} becomes
\[
	\inner{r(f),g} - 	\left(1- \frac{1}{p} \right) \inner{r(f),f} - \frac{1}{p} \inner{r(g),g} \leq 0 \ .
\]
As the above inequality holds for all $f,g \in \R^n$, clearly we can now perform the substitution $f \rightarrow t^{-1}f, g \rightarrow t^{p-1}g, r(f) \rightarrow t^{-(p-1)} r(f), r(g) \rightarrow t^{(p-1)^2} r(g)$, where $t\in \R^+$, which gives
\begin{equation}\label{eq:hoelder_gen_substitution}
	\inner{r(f),g} - 	\left(1- \frac{1}{p} \right) t^{-p}\inner{r(f),f} - \frac{1}{p} t^{p(p-1)}\inner{r(g),g} \leq 0 \ .
\end{equation}
A local optimum with respect to $t$ of the left side satisfies the necessary condition 
\begin{align*}
	0  = & \ \left(p-1 \right)t^{-p-1}\inner{r(f),f}  - (p-1) t^{p^2-p-1}\inner{r(g),g}\\
	 = & \ t^{-p-1} (p-1) \left(\inner{r(f),f}  -  t^{p^2}\inner{r(g),g} \right) \ ,
\end{align*}
which implies that
\[
	t^{p} = \left(\frac{\inner{r(f),f}}{\inner{r(g),g}}\right)^{\frac{1}{p}} \ .
\]
Plugging this into \eqref{eq:hoelder_gen_substitution} yields
\begin{align*}
	0  \geq & \inner{r(f),g} - 	\left(1- \frac{1}{p} \right) \inner{r(g),g}^{\frac{1}{p}} \inner{r(f),f}^{1-\frac{1}{p}} 
	- \frac{1}{p}\inner{r(f),f}^{1- \frac{1}{p}} \inner{r(g),g}^{\frac{1}{p}} \\
	  = & 	 \inner{r(f),g} - \inner{r(f),f}^{1- \frac{1}{p}} \inner{r(g),g}^{\frac{1}{p}} \ .
\end{align*}
By the homogeneity of $R$ we then have
\[
	\inner{r(f),g} \leq \inner{r(f),f}^{1- \frac{1}{p}} \inner{r(g),g}^{\frac{1}{p}} = p \cdot R(f)^{1-\frac{1}{p}} R(g)^{\frac{1}{p}} \ .
\]
Finally, note that we can replace the left side by its absolute value since replacing $g$ with $-g$ yields
\[
	\inner{r(f),-g} \leq p \cdot R(f)^{1-\frac{1}{p}} R(-g)^{\frac{1}{p}}= p \cdot R(f)^{1-\frac{1}{p}} R(g)^{\frac{1}{p}} \ ,
\]
where we used the fact that $R$ is even. 
\end{proof}

\begin{proof2}{of Lemma \ref{le:monotony_funct} for Algorithm~\ref{alg:general}}
Note that as $R(u)\geq 0$, the minimum of the objective of the inner problem is attained for some $u$ with $\inner{u,s(f^k)}>0$. Choose $u$ such that $\inner{u,s(f^k)}>0$. Then we minimize $\Psi_{f^k}$ on the ray $t u$, $t \geq 0$. We have 
\[ \Psi_{f^k}(tu) = R(tu) - \inner{t\,u,s(f^k)} = t^p \,R(u) - t\inner{u,s(f^k)} \]
and hence 
\[\frac{\partial}{\partial t} \Psi_{f^k}(tu) = p \,t^{p-1} R(u) - \inner{u,s(f^k)}\]
and thus the minimum is attained at $t^*(u) = \Big(\frac{\inner{u,s(f^k)}}{p\,R(u)}\Big)^{\frac{1}{p-1}}>0$ and 
\[ \Psi_{f^k}(t^*(u) u) = t^*(u)^p R(u) - t^*(u) \inner{u,s(f^k)} = (1-p) \Big(\frac{\inner{u,s(f^k)}^p}{p^p\,R(u)}\Big)^\frac{1}{p-1}.\]
Assume there exists $u$ that satisfies $\Psi_{f^k}(u) < \Psi_{f^k}(\hat{f}) $ where $\hat{f}=F(f^k)^{\frac{1}{1-p}}\,f^k$. 
Hence, also $\Psi_{f^k}(t^*(u)u) < \Psi_{f^k}(\hat{f})$, which implies 
\begin{align*}
   (1-p) \Big(\frac{\inner{u,s(f^k)}^p}{p^p\,R(u)}\Big)^\frac{1}{p-1} & <
    F(f^k)^{\frac{p}{1-p}} R(f^k) - F(f^k)^{\frac{1}{1-p}}\inner{f^k,s(f^k)}\\
  &= F(f^k)^{\frac{1}{1-p}}(1-p)  \ ,
\end{align*}
where we used the fact that $\inner{f^k,s(f^k)} = p S(f^k)$ and $S(f^k)=1$. Rearranging, we obtain
\[ 	F(f^k) > \frac{p^p R(u)}{\inner{u,s(f^k)}^p} \ . \]
Using the H{\"o}lder-type inequality of Proposition \ref{lemma:HoelderIneq} and $S(f^k)=1$, we obtain
\[ \inner{u,s(f^k)} \, \leq p S(f^k)^{1-\frac{1}{p}}S(u)^{\frac{1}{p}} = p S(u)^{\frac{1}{p}},\]
which gives $F(f^k) > F(u)$.
Let now $f^*$ be the minimizer of $\Psi_{f^k}$. 
Then $f^*$ satisfies $\Psi_{f^k}(f^*) \leq \Psi_{f^k}(\hat{f}) $. If equality holds then $\hat{f}=F(f^k)^{\frac{1}{1-p}}\,f^k$ is a minimizer of the inner problem and the sequence terminates. In this case $f^k$ is an eigenvector, see
proof of Theorem~\ref{th:convergence_eigenvector} for Algorithm~\ref{alg:general}. Otherwise $\Psi_{f^k}(f^*) < \Psi_{f^k}(\hat{f})$
and thus $u=f^*$ fulfills the above assumption and we get $F(f^k) > F(f^*)$, as claimed.
\end{proof2}
\begin{proof2}{of Theorem~\ref{th:convergence_eigenvector} for Algorithm~\ref{alg:general}}
Note that as $F(f)\geq 0$, the sequence $F(f^k)$ is bounded from below, and by Lemma \ref{le:monotony_funct} it is monotonically decreasing and thus converges to some $\lambda^* \in [0,F(f^0)]$. Moreover, $S(f^k)=1$ for all $k$. As $S$ is continuous it attains its minimum $m$ on the unit
sphere in $\R^n$. By assumption $m>0$. We obtain
\begin{align*}
 1 = S(f^k) = S\Big(\frac{f^k}{\norm{f^k}_2}\norm{f^k}_2\Big) \geq m\, \norm{f^k}_2^p, \quad \Longrightarrow \quad \norm{f^k}_2 \leq \Big(\frac{1}{m}\Big)^\frac{1}{p}.
\end{align*}
Thus the sequence $f^k$ is bounded and there exists a convergent subsequence $f^{k_j}$. Clearly, $\lim_{j \rightarrow \infty} F(f^{k_j}) = \lim_{k \rightarrow \infty} F(f^{k})= \lambda^*$.
Let now $f^* = \lim_{j \rightarrow \infty} f^{k_j}$, and suppose that there exists $u \in \R^n$ with $\Psi_{f^*}(u) < \Psi_{f^*}(\hat{f})$ where $\hat{f}=F(f^*)^{\frac{1}{1-p}}f^*$. Then, analogously to the proof of Lemma~\ref{le:monotony_funct}, one can conclude that $F(u)<F(f^*)= \lambda^*$ which contradicts the fact that $F(f^{k_j})$ has as its limit $\lambda^*$. Thus $\hat{f}$ is a minimizer of $\Psi_{f^*}$, which implies
\begin{align*}
0 \in \partial \Psi_{f^*}\big(F(f^*)^{\frac{1}{1-p}}f^*\big) &= \partial R\big(F(f^*)^{\frac{1}{1-p}}f^*\big) - s(f^*) = \left(F(f^*)^{\frac{1}{1-p}}\right)^{p-1} \partial R(f^*) - s(f^*) \\
                                   &= \frac{1}{F(f^*)}\Big( \partial R(f^*) - F(f^*)s(f^*)\Big),
\end{align*}
so that $f^*$ is an eigenvector with eigenvalue $\lambda^*$. As this argument was independent of the subsequence, any convergent subsequence of $f^k$
converges towards an eigenvector with eigenvalue $\lambda^*$.
\end{proof2}	

\paragraph{Practical implementation:} By the proof of Lemma \ref{le:monotony_funct}, descent in $F$ is not only guaranteed for the optimal solution of the inner problem, but for any vector $u$ which has inner objective value $\Phi_{f^k}(u)<0=\Phi_{f^k}(f^k)$ for Alg.~\ref{alg:P1} and $\Psi_{f^k}(u)<\Psi_{f^k}(F(f^k)^{\frac{1}{1-p}}\,f^k)$ in the case of Alg.~\ref{alg:general}. This has two important practical implications. 
First, for the convergence of the IPM, it is sufficient to use a vector $u$ satisfying the above conditions instead of the optimal solution of the
inner problem. In particular, in an early stage where one is far away from the limit, it makes no sense to invest much effort to solve the inner problem accurately. 
Second, if the inner problem is solved by a descent method, a good initialization for the inner problem at step $k+1$ is given by $f^{k}$ in the case of Alg.~\ref{alg:P1} and $F(f^k)^{\frac{1}{1-p}}\,f^k$ in the case of Alg.~\ref{alg:general} as descent in $F$ is guaranteed after one step.

\section{Application 1: $1$-spectral clustering and Cheeger cuts} \label{sec:app_cheeger_cut}
Spectral clustering is a graph-based clustering method (see \cite{Lux07} for an overview) based on a relaxation of
the NP-hard problem of finding the optimal balanced cut of an undirected graph. The spectral relaxation has as its solution
the second eigenvector of the graph Laplacian and the final partition is found by optimal thresholding. While usually
spectral clustering is understood as relaxation of the so called ratio/normalized cut, it can be equally seen as relaxation of the
ratio/normalized Cheeger cut, see \cite{BH09}. Given a weighted undirected graph with vertex set $V$ and weight matrix $W$, the ratio
Cheeger cut ($\RCC$) of a partition $(C,\overline{C})$, where $C\subset V$ and $\overline{C}=V \backslash C$, is defined as 
\[
	\RCC(C,\overline{C}) :=\frac{\cut(C,\overline{C})}{\min\{\abs{C},\abs{\overline{C}}\}} \ , \hspace{1cm}  \text{where} \hspace{1cm} \cut(A,B) = \sum_{i\in A,j\in B} w_{ij},
\]
where we assume in the following that the graph is connected.
Due to limited space the normalized version is omitted, but the proposed IPM can be adapted to this case.
In \cite{BH09} we proposed $p$-spectral clustering, a generalization of spectral clustering based on the second eigenvector of the nonlinear graph $p$-Laplacian (see \cite{Amg03}; the graph Laplacian is recovered for $p=2$). The main motivation was the relation between the optimal Cheeger cut $h_\RCC=\min_{C \subset V} \RCC(C,\overline{C})$ and the Cheeger cut $h^*_\RCC$ obtained by optimal thresholding the second eigenvector of the $p$-Laplacian, see \cite{BH09,Chu97},
\begin{align*}
 \forall \, p> 1, \qquad  \frac{h_\RCC}{\max_{i \in V} d_i} \;&\leq \; \frac{h^*_{\RCC}}{\max_{i \in V} d_i} \; \leq \;p \, \bigg(\frac{h_\RCC}{\max_{i \in V} d_i}\bigg)^{\frac{1}{p}} \ ,
\end{align*}
where $d_i = \sum_{i\in V}w_{ij}$ denotes the degree of vertex $i$. While the inequality is quite loose for spectral clustering ($p=2$), it becomes tight for $p\rightarrow 1$. Indeed in \cite{BH09} much better cuts than standard spectral clustering were obtained, at the expense of higher runtime. In \cite{SB10} the idea was taken up and they considered directly
the variational characterization of the ratio Cheeger cut, see also \cite{Amg03,Chu97},
\begin{equation}\label{eq:SecondEV}
  h_{\RCC} = \min_{f\, \mathrm{ nonconstant }}  \frac{\frac{1}{2}\sum_{i,j=1}^n w_{ij}|f_i-f_j|}{\norm{f-\median(f)\ones}_1} 
  = \min_{ \stackrel{f\, \mathrm{ nonconstant }}{\mathrm{\median(f)=0}}}  \frac{\frac{1}{2}\sum_{i,j=1}^n w_{ij}|f_i-f_j|}{\norm{f}_1}   \ .
\end{equation}
In \cite{SB10} they proposed a minimization scheme based on the Split Bregman method \cite{GolOsh09}. Their method produces comparable cuts to the 
ones in \cite{BH09}, while being computationally much more efficient. However, they could not provide any convergence guarantee about their method. 

In this paper we consider the functional associated to the $1$-Laplacian $\Delta_1$,
\begin{equation}\label{eq:rayleigh_cheeger}
 F\onespect(f) =  \frac{\frac{1}{2}\sum_{i,j=1}^n w_{ij}|f_i-f_j|}{\norm{f}_1} = \frac{\inner{f, \Delta_1 f}}{\norm{f}_1},
\end{equation}
where
\[
	 (\Delta_1 f)_i = \Big\{ \sum_{j=1}^n w_{ij} u_{ij} \, |\, u_{ij}=-u_{ji}, u_{ij} \in \mathrm{sign}(f_i-f_j)\Big\}  \text{ and }  \sign(x) = \left \{ \begin{tabular}{ll} $-1$, & $x<0$,\\ $[-1,1]$, & $x=0$,\\ $1$, & $x>0$.\end{tabular} \right.
\]
and study its associated nonlinear eigenproblem $0 \in \Delta_1 f - \lambda \, \sign(f) $. 
\begin{proposition}\label{pro:MedianZero}
Any non-constant eigenvector $f^*$ of the $1$-Laplacian has median zero. Moreover, let $\lambda_2$ be the 
second eigenvalue of the $1$-Laplacian, then if $G$ is connected it holds $\lambda_2=h_{\RCC}$.
\end{proposition}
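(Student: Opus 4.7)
My plan is twofold: first establish the median-zero property directly from the eigenvalue inclusion, and then use it together with the variational characterization \eqref{eq:SecondEV} to identify $\lambda_2$ with $h_\RCC$.

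For the median-zero claim, I would unpack $0 \in \Delta_1 f^* - \lambda^*\sign(f^*)$ pointwise: there exist antisymmetric $u_{ij} \in \sign(f^*_i - f^*_j)$ and $v_i \in \sign(f^*_i)$ with $\sum_j w_{ij} u_{ij} = \lambda^* v_i$ for every vertex $i$. Summing over all $i \in V$ kills the left-hand side by antisymmetry of $u_{ij}$, leaving $\lambda^* \sum_i v_i = 0$. Since $G$ is connected and $f^*$ is non-constant, there exists an edge $ij$ with $w_{ij}>0$ and $f_i^* \neq f_j^*$, so $R(f^*)>0$, and Theorem \ref{th:EulerId} gives $\lambda^* = F\onespect(f^*) > 0$; hence $\sum_i v_i = 0$. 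Writing $n_+, n_-, n_0$ for the numbers of strictly positive, strictly negative, and zero coordinates of $f^*$, and noting $v_i = \pm 1$ on the signed coordinates and $v_i \in [-1,1]$ on the zero coordinates, $\sum_i v_i = 0$ is equivalent to $|n_+ - n_-| \leq n_0$, which is precisely the combinatorial criterion for $0$ to be a median of $f^*$.

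For the identity $\lambda_2 = h_\RCC$ I would establish both inequalities. For the upper bound, observe that the numerator of $F\onespect$ is translation-invariant in $f$ while $c \mapsto \norm{f - c\ones}_1$ is minimized at $c = \median(f)$, so $F\onespect(f - \median(f)\ones) \leq F\onespect(f)$ for every non-zero $f$. This yields
\[
\min_{f \neq 0} F\onespect(f) = \min_{f \neq 0,\,\median(f)=0} F\onespect(f) = h_\RCC,
\]
with the minimum attained by compactness of $\{f : \norm{f}_1 = 1, \median(f)=0\}$. Any minimizer $f^*$ satisfies $0 \in \partial F\onespect(f^*)$, so Theorem \ref{th:CriticalPoint} shows that $f^*$ is a (non-constant, since $h_\RCC > 0$) eigenvector with eigenvalue exactly $h_\RCC$, proving $\lambda_2 \leq h_\RCC$. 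Conversely, for any non-constant eigenvector $f^*$ with eigenvalue $\lambda^*$, the first part gives $\median(f^*)=0$, hence $f^*$ is feasible in \eqref{eq:SecondEV} and Theorem \ref{th:EulerId} yields $\lambda^* = F\onespect(f^*) \geq h_\RCC$.

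The main obstacle I anticipate is the combinatorial step connecting the set-valued identity $\sum_i v_i = 0$ to the median-zero criterion; the rest is a routine application of the variational formula \eqref{eq:SecondEV} together with the necessary/sufficient conditions of Theorem \ref{th:CriticalPoint}.
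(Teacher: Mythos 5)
The median-zero argument and the direction $\lambda_2 \geq h_\RCC$ are correct and essentially identical to the paper's: sum the pointwise eigenvalue relation, use antisymmetry of $u_{ij}$ and $\lambda^*>0$ to get $\sum_i v_i=0$, read off the median criterion, and then feed median-zero nonconstant eigenvectors into \eqref{eq:SecondEV}. The problem is the direction $\lambda_2 \leq h_\RCC$. Your key inequality is backwards: since the numerator $R$ is translation-invariant and $\norm{f-\median(f)\ones}_1 = \min_c\norm{f-c\ones}_1 \leq \norm{f}_1$, subtracting the median \emph{decreases} the denominator and hence \emph{increases} the ratio, i.e.\ $F\onespect(f-\median(f)\ones) \geq F\onespect(f)$. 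Consequently the claimed identity $\min_{f\neq 0}F\onespect(f) = \min_{\median(f)=0}F\onespect(f) = h_\RCC$ is false: the unconstrained infimum of $F\onespect$ is $0$, attained at constant vectors (and approached by near-constant nonconstant vectors). A minimizer of $F\onespect$ over the set $\{\median(f)=0,\ f \text{ nonconstant}\}$ is therefore only a \emph{constrained} minimizer, and nothing in your argument shows that $0\in\partial F\onespect(f^*)$ for such a point; Theorem \ref{th:CriticalPoint} characterizes unconstrained critical points and cannot be invoked here without further work. This is exactly the step where the real content of the proposition lies.

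The paper closes this gap differently. By Lemma \ref{lemma_piecewise_constant_rcc}, the indicator vector $f^0=\ones_C$ of an optimal Cheeger set (with $\abs{C}\leq\abs{\overline{C}}$) satisfies $F\onespect(f^0)=h_\RCC$, is nonconstant, and has median zero. Running Algorithm \ref{alg:invPowCheegerCut} from $f^0$, Lemma \ref{lemma_monotony_funct_cheeger} guarantees that either the iteration terminates (in which case $f^0$ is already an eigenvector, by the argument in Theorem \ref{theorem_convergence_1spect}) or it produces a nonconstant, median-zero $f^1$ with $F\onespect(f^1)<h_\RCC$ — which contradicts \eqref{eq:SecondEV}. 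Hence $\ones_C$ is an eigenvector with eigenvalue $h_\RCC$, giving $\lambda_2\leq h_\RCC$. If you want to avoid the algorithmic detour, you would instead have to work with the translation-invariant functional $R(f)/\min_c\norm{f-c\ones}_1$ (whose minimizers over the open set of nonconstant vectors \emph{are} unconstrained critical points) and then relate the subdifferential of $S(f)=\min_c\norm{f-c\ones}_1$ at a median-zero point to the subgradients $v\in\sign(f)$ with $\inner{v,\ones}=0$; none of that is in your write-up.
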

\begin{proof}
The subdifferential of the enumerator of $F\onespect$ can be computed as
\begin{align*}
\partial \big(\frac{1}{2}\sum_{i,j=1}^n w_{ij}|f_i - f_j| \big)_i = \Big\{ \sum_{j=1}^n w_{ij} u_{ij} \, |\, u_{ij}=-u_{ji}, u_{ij} \in \mathrm{sign}(f_i-f_j)\Big\},
\end{align*}
where we use the set-valued mapping  
\[ \sign(x) = \left \{ \begin{tabular}{ll} $-1$, & $x<0$,\\ $[-1,1]$, & $x=0$,\\ $1$, & $x>0$.\end{tabular} \right.\]
Moreover, the subdifferential of the denominator of $F\onespect$ is
\[ \partial \norm{f}_1 = \sign(f).\]
Note that, assuming that the graph is connected, any non-constant eigenvector $f^*$ must have $\lambda^*>0$.
Thus if $f^*$ is an eigenvector of the $1$-Laplacian, there must exist $u_{ij}$ with  $u_{ij}=-u_{ji}$ and $u_{ij} \in \mathrm{sign}(f^*_i-f^*_j)$
and $\alpha_i$ with $\alpha_i \in \sign(f^*_i)$ such that 
\[ 0 = \sum_{j=1}^n w_{ij} u_{ij} - \lambda^* \alpha_i.\]
Summing over $i$ yields due to the anti-symmetry of $u_{ij}$, $\sum_i \alpha_i = |f^*_+| - |f^*_-| + \sum_{f^*_i=0} \alpha_i = 0$, where $|f^*_+|, |f^*_-|$
are the cardinalities of the positive and negative part of $f^*$ and $|f^*_0|$ is the number of components with value zero. Thus we get
\[ \big| |f^*_+| - |f^*_-| \big| \leq |f^*_0|,\]
which implies with $|f^*_+| + |f^*_-| + |f^*_0| = |V|$ that $|f^*_+| \leq \frac{|V|}{2}$ and $|f^*_-| \leq \frac{|V|}{2}$. Thus the median of $f^*$
is zero if $|V|$ is odd. If $|V|$ is even, the median is non-unique and is contained in $[\max f^*_-,\min f^*_+]$ which contains zero.

If the graph is connected, the only eigenvector corresponding to the first eigenvalue $\lambda_1=0$ of the $1$-Laplacian is the constant one.
As all non-constant eigenvectors have median zero, it follows with Equation \ref{eq:SecondEV} that $\lambda_2 \geq h_{\RCC}$. For the other
direction, we have to use the algorithm we present in the following and some subsequent results. By Lemma \ref{lemma_piecewise_constant_rcc}
there exists a vector $f^*=\ones_C$ with $\abs{C} \leq \abs{\overline{C}}$ such that $F\onespect(f^*) = h_{\RCC}$. Obviously,
$f^*$ is non-constant and has median zero and thus can be used as initial point $f^0$ for Algorithm \ref{alg:invPowCheegerCut}. 
By Lemma \ref{lemma_monotony_funct_cheeger} starting with $f^0=f^*$ the sequence either terminates and the current iterate $f^0$ 
is an eigenvector or one finds a $f^1$ with $F\onespect(f^{1}) < F\onespect(f^0)$, where $f^{1}$ has median zero. Suppose that there exists
such a $f^1$, then $F\onespect(f^{1}) < F\onespect(f^0) = \min_{ \stackrel{f\, \mathrm{ nonconstant }}{\mathrm{\median(f)=0}}} F\onespect(f)$
which is a contradiction. Therefore the sequence has to terminate and thus by the argument in the proof of Theorem \ref{theorem_convergence_1spect}
the corresponding iterate is an eigenvector. Thus we get $h_{\RCC} \geq \lambda_2$ and thus with $\lambda_2 \geq h_{\RCC}$ we arrive 
at the desired result.
\end{proof}

For the computation of the second eigenvector we have to modify the IPM which is discussed in the next section.

\subsection{Modification of the IPM for computing the second eigenvector of the $1$-Laplacian}
The direct minimization of \eqref{eq:rayleigh_cheeger} would be compatible with the IPM, but the 
global minimizer is the first eigenvector which is constant.
For computing the second eigenvector note that, unlike in the case $p=2$, 
we cannot simply project on the space orthogonal to the constant eigenvector, since mutual orthogonality of the eigenvectors does not hold in the nonlinear case. 

Algorithm \ref{alg:invPowCheegerCut} is a modification of Algorithm \ref{alg:P1} which computes a nonconstant eigenvector of the 1-Laplacian. The notation $|f^{k+1}_+|, |f^{k+1}_-|$ and $|f^{k+1}_0|$ refers to the cardinality of positive, negative and zero elements, respectively. Note that Algorithm \ref{alg:P1} requires in each step the computation of \emph{some} subgradient $s(f^k) \in \partial S(f^k)$, whereas in Algorithm \ref{alg:invPowCheegerCut} the subgradient $v^k$ has to satisfy $\inner{v^k,\ones}=0$. This condition ensures that the inner objective is invariant under addition of a constant and thus not affected by the subtraction of the median. 
Opposite to \cite{SB10} we can prove convergence to a nonconstant eigenvector of the $1$-Laplacian. However, we cannot guarantee convergence to the \emph{second} eigenvector. Thus we recommend to use multiple random initializations and use the result which achieves the best ratio Cheeger cut. 
\begin{algorithm}[htb]
   \caption{Computing a nonconstant $1$-eigenvector of the graph $1$-Laplacian}
   \label{alg:invPowCheegerCut}
\begin{algorithmic}[1]
   \STATE {\bfseries Input:} weight matrix $W$
   \STATE {\bfseries Initialization:} nonconstant $f^{0}$ with $\median(f^{0})=0$ and $\norm{f^{0}}_1=1$,  accuracy $\epsilon$
   \REPEAT
   \STATE $g^{k+1} = \argmin_{\norm{f}_2^2\leq 1} \left\{ \frac{1}{2}\sum_{i,j=1}^n w_{ij} |f_i - f_j| - \lambda^k \inner{f,v^k} \right\}$
   \STATE $f^{k+1}= g^{k+1} - \median\left(g^{k+1}\right) $
    \STATE $v^{k+1}_i = \left\{\begin{tabular}{ll} $\sign(f^{k+1}_i)$, & if $f^{k+1}_i \neq 0$,\\ $-\frac{|f^{k+1}_+|-|f^{k+1}_-|}{|f^{k+1}_0|}$, & if $f^{k+1}_i=0$.\end{tabular}\right.$,
   \STATE $\lambda^{k+1}= F\onespect(f^{k+1})$  
   	\UNTIL $\frac{\abs{\lambda^{k+1}-\lambda^k}}{\lambda^k}< \epsilon$
\end{algorithmic}
\end{algorithm}

\begin{lemma}\label{lemma_monotony_funct_cheeger}
The sequence $f^k$ produced by Algorithm \ref{alg:invPowCheegerCut} satisfies $F\onespect(f^k) > F\onespect(f^{k+1})$ for all $k\geq 0$ or the sequence terminates.
\end{lemma}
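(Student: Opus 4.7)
}
My plan is to adapt the proof of Lemma~\ref{le:monotony_funct} for Algorithm~\ref{alg:P1}, accounting for the two modifications in Algorithm~\ref{alg:invPowCheegerCut}: the particular choice of the subgradient vector $v^k$, and the subtraction of the median after the inner minimization. The goal is to reduce the analysis to the same calculation as before, using two geometric facts that make the median subtraction harmless.

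First I would verify that the stated $v^k$ is indeed a valid element of $\partial S(f^k)$ with $S(f)=\norm{f}_1$. The only nontrivial issue is the value assigned to coordinates with $f^{k}_i=0$, which must lie in $[-1,1]$. Because $\median(f^k)=0$ is preserved by construction, the argument used in the proof of Proposition~\ref{pro:MedianZero} gives $\bigl||f^{k}_+|-|f^{k}_-|\bigr|\leq |f^{k}_0|$, so the assignment is admissible. Moreover, a direct computation shows $\inner{\ones,v^k}=(|f^{k}_+|-|f^{k}_-|)-(|f^{k}_+|-|f^{k}_-|)=0$, and the generalized Euler identity (Theorem~\ref{th:EulerId}) gives $\inner{f^k,v^k}=S(f^k)=\norm{f^k}_1$.

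Next I would exploit the $0$-homogeneity of $F\onespect$, the invariance of $v^k$ under positive rescaling of $f^k$, and the $1$-homogeneity of $\Phi_{f^k}$ to assume without loss of generality that $\norm{f^k}_2=1$, so that $f^k$ is feasible for the inner problem. Then as in the proof for Algorithm~\ref{alg:P1},
\[
 \Phi_{f^k}(f^k)=R(f^k)-\lambda^k\inner{f^k,v^k}=R(f^k)-F\onespect(f^k)\,S(f^k)=0,
\]
so the optimal value of the inner problem is nonpositive. If it equals zero, then $f^k$ is itself a minimizer, which leads to $0\in\partial R(f^k)-\lambda^k v^k$, i.e.\ $f^k$ is already a $1$-eigenvector, and the sequence terminates. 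Otherwise the optimum is strictly negative and is attained at some $g^{k+1}$ with $R(g^{k+1})<\lambda^k\inner{g^{k+1},v^k}$.

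The key new ingredient is to show that the median subtraction preserves this strict inequality. Here I would use two facts: (i) $R(f)=\tfrac12\sum_{i,j} w_{ij}|f_i-f_j|$ is invariant under addition of a constant vector, so $R(f^{k+1})=R(g^{k+1})$; and (ii) $\inner{\ones,v^k}=0$, so $\inner{f^{k+1},v^k}=\inner{g^{k+1},v^k}$. Together these give $R(f^{k+1})<\lambda^k\inner{f^{k+1},v^k}$. Combining this with the subgradient inequality
\[
 S(f^{k+1})\geq S(f^k)+\inner{f^{k+1}-f^k,v^k}=\inner{f^{k+1},v^k},
\]
(where the last equality uses the Euler identity for $f^k$) and noting that $R(f^{k+1})\geq 0$ forces $\inner{f^{k+1},v^k}>0$, we obtain $F\onespect(f^{k+1})=R(f^{k+1})/S(f^{k+1})<\lambda^k=F\onespect(f^k)$, completing the argument. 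The main obstacle is this last translation step: checking that introducing the projection $f\mapsto f-\median(f)\ones$ into the iteration does not destroy the descent guarantee, which is precisely where the careful design of $v^k$ (with $\inner{v^k,\ones}=0$) pays off.
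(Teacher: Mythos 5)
Your proposal is correct and follows essentially the same route as the paper's proof: nonpositivity of the inner optimum, termination in the degenerate case, invariance of $R$ and of $\inner{\cdot\,,v^k}$ under addition of constants via $\inner{v^k,\ones}=0$, and then bounding $\inner{f^{k+1},v^k}$ by $\norm{f^{k+1}}_1$. Your final step via the subgradient inequality for $S=\norm{\cdot}_1$ is just H\"older's inequality $\inner{f^{k+1},v^k}\le\norm{f^{k+1}}_1\norm{v^k}_\infty$ in disguise, which is exactly what the paper uses; your explicit check that $v^k\in\partial\norm{f^k}_1$ (using $\bigl||f^k_+|-|f^k_-|\bigr|\le|f^k_0|$ from $\median(f^k)=0$) is a detail the paper leaves implicit.
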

\begin{proof}
Note that, analogously to the proof of Lemma \ref{le:monotony_funct}, we can conclude that the inner objective is nonpositive at the optimum, where
the sequence terminates if the optimal value is zero as the previous $f^k$ is among the minimizers of the inner problem. Now observe that the objective of the inner optimization problem is invariant under addition of a constant. This follows from the fact that we always have $\inner{v^k,\ones}=0$, which can be easily verified. Hence, with $R(f)= \frac{1}{2}\sum_{i,j=1}^n w_{ij} |f_i - f_j|$, we get
\[
	R(f^{k+1}) - \lambda^k \inner{f^{k+1},v^k} = R(g^{k+1}) - \lambda^k \inner{g^{k+1},v^k} < 0 \ .
\]
Dividing both sides by $\norm{f^{k+1}}_1$  yields
\[
	\frac{R(f^{k+1})}{\norm{f^{k+1}}_1} - \lambda^k \frac{\inner{f^{k+1},v^k}}{\norm{f^{k+1}}_1} < 0 \ ,
\]
and with $\inner{f^{k+1},v^k} \leq \norm{f^{k+1}}_1 \norm{v^{k}}_\infty = \norm{f^{k+1}}_1$, the result follows.
\end{proof}

\begin{theorem}\label{theorem_convergence_1spect}
The sequence $f^k$ produced by Algorithm \ref{alg:invPowCheegerCut} converges to an eigenvector $f^*$ of the $1$-Laplacian with eigenvalue $\lambda^* \in \left[ h_\RCC, F\onespect(f^0) \right]$. Furthermore, $F\onespect(f^k) > F\onespect(f^{k+1})$ for all $k\geq 0$ or the sequence terminates.
\end{theorem}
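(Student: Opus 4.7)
The plan is to mirror the proof of Theorem~\ref{th:convergence_eigenvector} for Algorithm~\ref{alg:P1}, adapted to handle the median-subtraction step (line~5) and the specific choice of subgradient $v^k$, and to supply the lower bound $h_\RCC$ on the limit eigenvalue. Monotonicity, and the fact that termination at step $k$ already certifies $f^k$ as an eigenvector (inner minimum zero plus $1$-homogeneity forces $0 \in \partial R(f^k) - \lambda^k v^k$, exactly as in the proof for Algorithm~\ref{alg:P1}), are furnished by Lemma~\ref{lemma_monotony_funct_cheeger}. It then remains to treat the non-terminating case: show $\lambda^k \to \lambda^*$ with $\lambda^* \geq h_\RCC$, extract a convergent subsequence of $(f^k,v^k)$, and show the limit solves the $1$-Laplacian eigenproblem.

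Every iterate is non-constant with $\median(f^k)=0$ (non-constancy because constant vectors give $\Phi_{f^k}=0$ while the inner minimum is strictly negative before termination). Hence by \eqref{eq:SecondEV}, $F\onespect(f^k) \geq h_\RCC$, and the monotone sequence $\lambda^k = F\onespect(f^k)$ converges to some $\lambda^* \in [h_\RCC, F\onespect(f^0)]$. Using the scale invariance of $F\onespect$ and of the inner problem I renormalize to $\norm{f^k}_1=1$; by Bolzano--Weierstrass, a subsequence $f^{k_j} \to f^*$ with $\norm{f^*}_1=1$, $\median(f^*)=0$, and $F\onespect(f^*)=\lambda^*$ by continuity. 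The pointwise bound $\norm{v^k}_\infty \leq 1$ (which is exactly the inequality $||f^k_+|-|f^k_-|| \leq |f^k_0|$ used in the proof of Proposition~\ref{pro:MedianZero}) lets me extract a further subsequence $v^{k_j} \to v^*$, and a componentwise analysis shows $v^* \in \sign(f^*)$ with $\inner{v^*,\ones}=0$.

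The core step is the descent-contradiction. Assume $\min_{\norm{u}_2\leq 1}\{R(u) - \lambda^*\inner{u,v^*}\} = -\delta < 0$, witnessed by some $u^*$. Since $\lambda^{k_j}\to\lambda^*$ and $v^{k_j}\to v^*$, the inner-problem value at $f^{k_j}$ evaluated at $u^*$ lies below $-\delta/2$ for large $j$, hence so does the value at the minimizer $g^{k_j+1}$. Because $\Phi_{f^{k_j}}$ is invariant under addition of multiples of $\ones$ (owing to $\inner{v^{k_j},\ones}=0$), the same bound holds at $f^{k_j+1} = g^{k_j+1} - \median(g^{k_j+1})\ones$. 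Using $\inner{f^{k_j+1},v^{k_j}} \leq \norm{f^{k_j+1}}_1 \norm{v^{k_j}}_\infty \leq \norm{f^{k_j+1}}_1$ and dividing through by $\norm{f^{k_j+1}}_1 = 1$ then yields $\lambda^{k_j+1} \leq \lambda^{k_j} - \delta/2$, which contradicts $\lambda^k \to \lambda^*$. Hence $\min_{\norm{u}_2\leq 1}\Phi_{f^*}(u)=0$; by $1$-homogeneity the minimum is attained on the whole ray through $f^*$, so $0 \in \partial R(f^*) - \lambda^* v^*$ with $v^* \in \sign(f^*)$, which is precisely the $1$-Laplacian eigenvalue equation.

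The main obstacle is the set-valued, discontinuous nature of $f\mapsto\sign(f)$ on the zero set: the algorithm's choice of $v^k$ on zero components is what enforces $\inner{v^k,\ones}=0$, but it depends on $f^k$ only discontinuously. All subtlety sits in the extraction $v^{k_j}\to v^*$ and the verification $v^*\in\sign(f^*)$ with $\inner{v^*,\ones}=0$; once that is in place, the remainder is a routine adaptation of the proof for Algorithm~\ref{alg:P1}.
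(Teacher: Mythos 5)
Your proposal is correct and follows essentially the same route as the paper's proof, which likewise uses non-constancy and zero median of the iterates to get the lower bound $h_\RCC$, extracts a convergent subsequence by compactness together with continuity of the median subtraction, and then runs the descent-contradiction argument of Theorem~\ref{th:convergence_eigenvector}. Your explicit extraction of a convergent subsequence of the subgradients $v^{k_j}$ and the verification that $v^* \in \sign(f^*)$ with $\inner{v^*,\ones}=0$ make precise a step the paper leaves implicit by stating that the proof ``proceeds analogously''.
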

\begin{proof}
Note that every constant vector $u_0$ satisfies $\Phi_{f^k}(u_0)=0$ as $\inner{v^k,\ones}=0$. The minimizer of $\Phi_{f^k}$ is either negative
or the sequence terminates in which case the previous non-constant $g^k$ is a minimizer. In any case $g^{k+1}$ cannot be constant and
in turn $f^{k+1}$ is nonconstant and has median zero. Thus for all $k$,
\[ 
F\onespect(f^k) =  \frac{\frac{1}{2}\sum_{i,j=1}^n w_{ij}|f^k_i-f^k_j|}{\norm{f^k}_1} =  \frac{\frac{1}{2}\sum_{i,j=1}^n w_{ij}|f^k_i-f^k_j|}{\norm{f^k-\median(f^k)\ones}_1} \geq h_\RCC, 
\] 
where we use that the median of $f^k$ is zero.
Thus $F\onespect(f^k)$ is lower-bounded by $h_\RCC$. Note that $h_\RCC \leq \lambda_2$. 
We can conclude now analogously to Theorem  \ref{th:convergence_eigenvector} that the sequence $F\onespect(f^k)$ converges to some limit 
\[
	\lambda^* = \lim_{k \rightarrow \infty} F\onespect(f^k) \geq h_\RCC  \ .
\]
As in Theorem \ref{th:convergence_eigenvector} the compactness of the set containing the sequence $g^{k}$ implies the existence of a convergent subsequence $g^{k_j}$, and using the fact that subtracting the median is continuous we have $\lim_{j \rightarrow \infty} f^{k_j} = g^* - \median(g^*)\ones =: f^*$. The proof now proceeds analogously to Theorem \ref{th:convergence_eigenvector}.
\end{proof}

\subsection{Quality guarantee for $1$-spectral clustering}
Even though we cannot guarantee that we obtain the optimal ratio Cheeger cut, we can guarantee that $1$-spectral clustering always leads to a ratio Cheeger cut at least as good as the one found by standard spectral clustering. 
Let $(C^*_f, \overline{C^*_f})$ be the partition of $V$ obtained by optimal thresholding of $f$, where $C^*_f= \argmin\nolimits_t \,\RCC(\Ctpos, \overline{\Ctpos})$, and for $t \in \R$, $\Ctpos=\curlybrackets{ i \in V\,|\, f_i > t}$. Furthermore, $\ones_C$ denotes the vector which is $1$ on $C$ and $0$ else.
\begin{lemma}\label{lemma_piecewise_constant_rcc}
Let $C,\overline{C}$ be a partitioning of the vertex set $V$, and assume that $\abs{C} \leq \abs{\overline{C}}$. Then for any vector $f\in \R^n$ of the form
$f = \alpha \ones_C$, where $\alpha \in \R$, it holds that $F\onespect(f) = \RCC(C,\overline{C})$.
\end{lemma}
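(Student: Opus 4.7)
The plan is a direct computation: evaluate the numerator and denominator of $F\onespect(f)$ separately when $f=\alpha\ones_C$, then match the result to $\RCC(C,\overline{C})$ using the hypothesis $|C|\leq|\overline{C}|$.

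First I would dispose of the degenerate case $\alpha=0$ (where $F\onespect$ is not well-defined) by restricting to $\alpha\neq 0$, which is the only case where the claim is meaningful. For nonzero $\alpha$, the key observation is that for $f=\alpha\ones_C$, the difference $f_i-f_j$ is $\pm\alpha$ whenever exactly one of $i,j$ lies in $C$, and $0$ otherwise. Therefore
\[
\tfrac{1}{2}\sum_{i,j=1}^n w_{ij}|f_i-f_j| \;=\; \tfrac{1}{2}\sum_{i\in C,\,j\in\overline{C}} w_{ij}|\alpha| \;+\; \tfrac{1}{2}\sum_{i\in\overline{C},\,j\in C} w_{ij}|\alpha| \;=\; |\alpha|\,\cut(C,\overline{C}),
\]
where I use the symmetry $w_{ij}=w_{ji}$.

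Next I would compute $\norm{f}_1 = |\alpha|\,|C|$, which is immediate from $f_i=\alpha$ on $C$ and $0$ elsewhere. Combining these two calculations,
\[
F\onespect(f) \;=\; \frac{|\alpha|\,\cut(C,\overline{C})}{|\alpha|\,|C|} \;=\; \frac{\cut(C,\overline{C})}{|C|}.
\]
Finally, invoking the hypothesis $|C|\leq|\overline{C}|$ gives $\min\{|C|,|\overline{C}|\}=|C|$, so the right-hand side equals $\RCC(C,\overline{C})$ by definition.

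There is no real obstacle here; the whole proof is a two-line bookkeeping exercise. The only point to be slightly careful about is handling the absolute value $|\alpha|$ so it cancels correctly between numerator and denominator (which it does because both numerator and denominator are $1$-homogeneous in $\alpha$), and the implicit assumption that $C$ is nonempty so that $|\alpha|\,|C|\neq 0$.
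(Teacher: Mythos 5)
Your proof is correct and follows essentially the same route as the paper: a direct computation of numerator and denominator for $f=\alpha\ones_C$, with the scale factor $\alpha$ removed (the paper normalizes to $\alpha=1$ by scale invariance, you carry $|\alpha|$ through and cancel it) and the hypothesis $|C|\leq|\overline{C}|$ used to identify $|C|$ with $\min\{|C|,|\overline{C}|\}$. Your explicit remarks about the degenerate cases $\alpha=0$ and $C=\emptyset$ are a minor but harmless addition.
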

\begin{proof}
As $F\onespect$ is scale invariant, we can without loss of generality assume that $\alpha=1$. Then we have
\begin{align*}
	F\onespect(f) & =  \frac{\frac{1}{2}\sum_{i,j=1}^n w_{ij}|f_i-f_j|}{\norm{f}_1} = \frac{\frac{1}{2}\sum_{i\in C, j \notin C} w_{ij} + \frac{1}{2}\sum_{i\notin C, j \in C }w_{ij}}{\sum_{i\in C} 1} \\
	   & = \frac{\cut(C,\overline{C})}{\abs{C}} = \frac{\cut(C,\overline{C}}{\min\curlybrackets{\abs{C},\abs{\overline{C}}}} = \RCC(C,\overline{C}) \ .
\end{align*}
\end{proof}

\begin{lemma}\label{lemma_thresholding_decrease}
Let $f\in \R^n$ with $\median(f)=0$, and 
$C=\argmin\{|C^*_f|, |\overline{C^*_f}|\}$. Then the vector $f^*= \ones_C$ satisfies $	F\onespect(f) \geq F\onespect(f^*)$.
\end{lemma}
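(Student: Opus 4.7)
The plan is to reduce the lemma to the classical coarea/layer-cake argument underlying Cheeger-type inequalities. By Lemma~\ref{lemma_piecewise_constant_rcc}, $F\onespect(f^*) = F\onespect(\ones_C) = \RCC(C,\overline{C}) = \min_{t} \RCC(\Ctpos,\overline{\Ctpos})$, since $C$ is the smaller side of the optimal threshold partition. So it suffices to show that
\[
F\onespect(f) \;\geq\; \min_{t \in \R}\,\RCC(\Ctpos,\overline{\Ctpos}).
\]

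First I would rewrite the numerator of $F\onespect(f)$ via the (discrete) coarea formula:
\[
\tfrac{1}{2}\sum_{i,j=1}^n w_{ij}\,|f_i - f_j| \;=\; \int_{-\infty}^{\infty} \cut(\Ctpos,\overline{\Ctpos})\,dt,
\]
which follows by writing $|f_i-f_j| = \int_{-\infty}^{\infty} |\oo{f_i>t} - \oo{f_j>t}|\,dt$ and interchanging the sum and integral. Next, for the denominator I use the layer-cake representation: since $f$ has $\median(f)=0$, for $t \geq 0$ the set $\Ctpos$ is the smaller side ($|\Ctpos|\leq |V|/2$), while for $t<0$ the set $\overline{\Ctpos}$ is the smaller side. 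Splitting $\norm{f}_1 = \sum_{f_i>0} f_i + \sum_{f_i<0}(-f_i)$ into integrals of indicator functions gives
\[
\norm{f}_1 \;=\; \int_0^{\infty} |\Ctpos|\,dt + \int_{-\infty}^{0}|\overline{\Ctpos}|\,dt \;=\; \int_{-\infty}^{\infty} \min\{|\Ctpos|,|\overline{\Ctpos}|\}\,dt,
\]
where the second equality uses precisely the median-zero assumption.

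Finally, I would apply the elementary fact that for nonnegative functions $a,b$ with $\int b > 0$,
\[
\frac{\int a(t)\,dt}{\int b(t)\,dt} \;\geq\; \inf_{t:\,b(t)>0}\frac{a(t)}{b(t)}.
\]
Taking $a(t)=\cut(\Ctpos,\overline{\Ctpos})$ and $b(t)=\min\{|\Ctpos|,|\overline{\Ctpos}|\}$ yields
\[
F\onespect(f) \;\geq\; \inf_{t}\frac{\cut(\Ctpos,\overline{\Ctpos})}{\min\{|\Ctpos|,|\overline{\Ctpos}|\}} \;=\; \min_{t}\RCC(\Ctpos,\overline{\Ctpos}) \;=\; \RCC(C,\overline{C}) \;=\; F\onespect(f^*),
\]
which is what we want.

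The main obstacle is justifying the denominator identity cleanly: one must explicitly use $\median(f)=0$ to ensure that on each half-line the threshold sets are always on the ``minimum'' side, and handle the edge cases (values equal to zero, even $|V|$) carefully so that the $\min$ is correctly picked out for (almost) every $t$. The numerator coarea identity and the averaging inequality are routine; essentially all the content of the lemma is in the denominator identity.
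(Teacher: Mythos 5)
Your proposal is correct, and it reaches the conclusion through the same underlying layer-cake mechanism as the paper, but with a noticeably different organization. The paper first splits $f$ into positive and negative parts, proves $R(f)=R(f^+)+R(f^-)$ via the pointwise identity $|f_i^+-f_i^--f_j^++f_j^-|=|f_i^+-f_j^+|+|f_i^--f_j^-|$ and $\norm{f}_1=\norm{f^+}_1+\norm{f^-}_1$, applies the two-term mediant inequality $\frac{a+b}{c+d}\geq\min\{a/c,b/d\}$, and then runs a one-sided coarea computation separately on $f^+$ and $f^-$ over $[0,\infty)$. You instead perform a single coarea computation for the numerator over all of $\R$ and pair it with the layer-cake identity $\norm{f}_1=\int_{-\infty}^{\infty}\min\{|\Ctpos|,|\overline{\Ctpos}|\}\,dt$, after which one integral averaging inequality $\int a/\int b\geq\inf_{b>0}(a/b)$ finishes the argument; this removes the $f^{\pm}$ bookkeeping and the mediant step entirely. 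Both routes hinge on exactly the same use of $\median(f)=0$, namely that $|\{i:f_i>0\}|\leq|V|/2$ and $|\{i:f_i<0\}|\leq|V|/2$, so that for $t\geq 0$ the threshold set $\Ctpos$ is the minority side while for $t<0$ its complement is; you correctly identify this as the step carrying the content (the paper derives it from $0\in\partial\sum_i|f_i|$), and the rest of your sketch --- the numerator coarea identity, the exactness of the denominator identity, and the final chain $\inf_t\RCC(\Ctpos,\overline{\Ctpos})=\RCC(C,\overline{C})=F\onespect(f^*)$ via Lemma~\ref{lemma_piecewise_constant_rcc} --- all checks out. Your version is arguably the cleaner of the two; the paper's splitting into $f^+$ and $f^-$ is essentially a discrete surrogate for your global denominator identity.
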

\begin{proof}
Denote by $f^+:V\rightarrow \R$ the function $f_i^+ := \max\{0,f_i\}$, and analogously, let $f^- := \max\{0,-f_i\} $. Then we have
\begin{align}
	R(f) & = \frac{1}{2}\sum_{i,j} w_{ij} \abs{f_i-f_j} = \frac{1}{2}\sum_{i,j}  w_{ij} \abs{f_i^+-f_i^- - f_j^+ + f_j^-} \label{eq:splitting} \ .
	\end{align}
Note that we always have $\abs{f_i^+-f_i^- - f_j^+ + f_j^-} = \abs{f_i^+- f_j^+} +\abs{f_i^-  - f_j^-}$, which can easily be verified by performing a case distinction over the signs of $f_i$ and $f_j$. 
Eq.\ \eqref{eq:splitting} can now be written as
\begin{align*}
	R(f) 
	  = \frac{1}{2}\sum_{i,j}  w_{ij} \abs{f_i^+-f_j^+} + \frac{1}{2}\sum_{i,j}  w_{ij} \abs{f_i^--f_j^-} =  R(f^+) + R(f^-) \ .
\end{align*}
Using the fact that $\norm{f}_1$ can be decomposed as $\norm{f^+}_1 + \norm{f^-}_1$, we obtain
\begin{equation}\label{eq:tresholding_decrease}
	\frac{R(f)}{\norm{f}_1} = \frac{R(f^+) + R(f^-)}{\norm{f^+}_1 + \norm{f^-}_1} \geq \min\left\{\frac{R(f^+)}{\norm{f^+}_1},  \frac{R(f^-)}{\norm{f^-}_1}\right\} \ .
\end{equation}
The last inequality follows from the fact that we always have for $a,b,c,d>0$,
\[
	\frac{a+b}{c+d} \geq \min \left\{\frac{a}{c}, \frac{b}{d}\right\} \ ,
\]
which can be easily shown by contradiction. Let wlog $\min \left\{\frac{a}{c}, \frac{b}{d}\right\} = \frac{a}{c}$, and assume that $\frac{a+b}{c+d} < \frac{a}{c}$. This implies $\frac{a}{c}> \frac{b}{d}$, which is a contradiction to $\frac{a}{c} \leq \frac{b}{d}$. 
Note that $\median(f)=0$, hence we have
\[
	0 \in \argmin_c\sum_{i\in V} \abs{f_i-c} \ ,
\]
which implies that $0 \in \partial \sum_{i\in V} \abs{f_i}$ and hence there exist coefficients $\abs{\alpha_i }\leq 1$ such that
\[
	0 = \sum_{f_i\neq 0} \sign(f_i) + \sum_{f_i= 0} \alpha_i \ ,
\]
which is equivalent to $\Big|\abs{\left\{i,f_i>0 \right\}} - \abs{\left\{i,f_i<0 \right\}}\Big| \leq \abs{\left\{i,f_i=0 \right\}}$. 
This inequality implies that $\abs{\left\{i,f_i>0 \right\}} \leq  \frac{\abs{V}}{2}$ and $\abs{\left\{i,f_i<0 \right\}} \leq  \frac{\abs{V}}{2}$.
We now rewrite $R(f^+)$ as follows:
\begin{align*}
	R(f^+) = \frac{1}{2} \sum_{f_i^+>f_j^+}w_{ij}\left(f^+_i-f^+_j\right)
	 = \sum_{f_i^+>f_j^+}w_{ij} \int_{f_j^+}^{f_i^+} 1 dt
	= \int_{0}^\infty \sum_{f_i^+>t \geq f_j^+}w_{ij}\;  dt \ .
\end{align*}
Note that for $t \geq 0$,
\[
	\sum_{f^+_i >t \geq f^+_j} w_{ij} =  
	 \cut(\Ctpos,\overline{\Ctpos}) =
	 \frac{\cut(\Ctpos,\overline{\Ctpos})}{\min\curlybrackets{\abs{\Ctpos}, \abs{\overline{\Ctpos}}}} 	\cdot \abs{\Ctpos} 
		\geq \RCC(C^*_f, \overline{C^*_f}) \cdot \abs{\Ctpos},
\]
where in the second step we used that $\abs{\Ctpos} \leq \abs{\left\{i,f_i>0 \right\}} \leq  \frac{\abs{V}}{2}.$  Hence we have
\begin{align*}
	R(f^+) & \geq \int_{0}^\infty  \RCC(C^*_f, \overline{C^*_f}) \cdot \abs{\Ctpos} dt = \RCC(C^*_f, \overline{C^*_f}) \int_{0}^\infty \sum_{f_i >t} 1 dt\\
	&= \RCC(C^*_f, \overline{C^*_f})\sum_{f_i >0} \int_{0}^{f_i} 1 dt = \RCC(C^*_f, \overline{C^*_f}) \norm{f^+}_1 \ .
\end{align*}
Hence it holds that $F\onespect(f^+) \geq \RCC(C^*_f, \overline{C^*_f})$, and analogously one shows that $F\onespect(f^-) \geq \RCC(C^*_{-f}, \overline{C^*_{-f}})$. 
Note that $\RCC(C^*_f, \overline{C^*_f}) = \RCC(C^*_{-f}, \overline{C^*_{-f}}) = F\onespect(f^*)$, by Lemma \ref{lemma_piecewise_constant_rcc}. Combining this with Eq.\ \eqref{eq:tresholding_decrease} yields the result.
\end{proof}

\begin{theorem}\label{th:better_than_2spect}
Let $u$ denote the second eigenvector of the standard graph Laplacian, and f denote the result of Algorithm~\ref{alg:invPowCheegerCut} after initializing with the vector $\frac{1}{\abs{C}}\ones_{C}$, where $C=\argmin\{|C^*_u|, |\overline{C^*_u}|\}$. Then $\RCC(C^*_u,\overline{C^*_u}) \geq  \RCC(C^*_f,\overline{C^*_f})$.
\end{theorem}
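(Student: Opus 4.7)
The plan is to chain the three preceding lemmas together via the monotonicity guarantee of Theorem~\ref{theorem_convergence_1spect}. The initial vector $f^0 := \frac{1}{|C|}\ones_C$ has been chosen so that $F\onespect(f^0) = \RCC(C^*_u,\overline{C^*_u})$; Algorithm~\ref{alg:invPowCheegerCut} can only decrease this value; and Lemma~\ref{lemma_thresholding_decrease} bounds $F\onespect$ of the output from below by the Cheeger cut obtained by optimally thresholding the output. Stringing those three facts in sequence gives the claim.

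First I would verify that $f^0$ is a valid starting point for Algorithm~\ref{alg:invPowCheegerCut}. Since $C = \argmin\{|C^*_u|, |\overline{C^*_u}|\}$ we have $|C| \leq |V|/2$, which makes $\median(f^0) = 0$; moreover $f^0$ is non-constant (as $C$ is a proper, nonempty subset of $V$) and $\norm{f^0}_1 = 1$. A direct application of Lemma~\ref{lemma_piecewise_constant_rcc} with $\alpha = 1/|C|$ then gives $F\onespect(f^0) = \RCC(C, \overline C)$, which equals $\RCC(C^*_u, \overline{C^*_u})$ since $\{C,\overline C\}$ and $\{C^*_u,\overline{C^*_u}\}$ describe the same partition.

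Next, Theorem~\ref{theorem_convergence_1spect} ensures that the sequence $F\onespect(f^k)$ is non-increasing and converges to an eigenvalue $\lambda^* \leq F\onespect(f^0)$, so whether the algorithm terminates or the iterates converge to a limit, the output $f$ satisfies $F\onespect(f) \leq F\onespect(f^0)$. Finally, every iterate of Algorithm~\ref{alg:invPowCheegerCut} has median zero by construction (the median is subtracted at the end of each step), so Lemma~\ref{lemma_thresholding_decrease} applies to the output $f$: setting $C' = \argmin\{|C^*_f|, |\overline{C^*_f}|\}$ it yields $F\onespect(f) \geq F\onespect(\ones_{C'})$, and a second use of Lemma~\ref{lemma_piecewise_constant_rcc} identifies the right-hand side with $\RCC(C^*_f, \overline{C^*_f})$.

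Chaining these inequalities gives $\RCC(C^*_f, \overline{C^*_f}) \leq F\onespect(f) \leq F\onespect(f^0) = \RCC(C^*_u, \overline{C^*_u})$, which is the desired conclusion. I do not anticipate any real obstacle; the argument is essentially a concatenation of existing lemmas, the only piece of bookkeeping being the verification that $|C| \leq |V|/2$ so that $f^0$ has median zero. In fact the same reasoning shows a stronger statement: starting Algorithm~\ref{alg:invPowCheegerCut} from the indicator of \emph{any} candidate set $C$ with $|C|\leq|V|/2$ yields a ratio Cheeger cut no worse than $\RCC(C,\overline C)$, which is the intuitive reason why the IPM can only improve upon the seed partition.
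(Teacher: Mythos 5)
Your proposal is correct and follows essentially the same route as the paper: Lemma~\ref{lemma_piecewise_constant_rcc} to identify $F\onespect$ of the initial indicator vector with $\RCC(C^*_u,\overline{C^*_u})$, the monotonicity of Lemma~\ref{lemma_monotony_funct_cheeger} to get $F\onespect(f)\leq F\onespect(f^0)$, and Lemma~\ref{lemma_thresholding_decrease} plus Lemma~\ref{lemma_piecewise_constant_rcc} again to bound $F\onespect(f)$ below by $\RCC(C^*_f,\overline{C^*_f})$. The extra bookkeeping you include (checking that $f^0$ is nonconstant, has median zero, and $\norm{f^0}_1=1$) is a sensible addition that the paper leaves implicit, and your closing remark about arbitrary seed sets is a valid immediate generalization.
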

\begin{proof}
Using Lemma \ref{lemma_monotony_funct_cheeger} and \ref{lemma_piecewise_constant_rcc} , we have the following chain of inequalities: 
 \[
 	\RCC(C^*_u,\overline{C^*_u}) \stackrel{\ref{lemma_piecewise_constant_rcc}}{=} F\onespect\left(\frac{1}{\abs{C}}\ones_{C}\right)
 =  F\onespect(\ones_C) \stackrel{\ref{lemma_monotony_funct_cheeger}}{\geq} F\onespect(f) .
 \]
 With $C_1 := \argmin\{|C^*_f|, |\overline{C^*_f}|\}$, we obtain by Lemma~\ref{lemma_thresholding_decrease} and \ref{lemma_piecewise_constant_rcc}:
 \[
   F\onespect(f) \stackrel{\ref{lemma_thresholding_decrease}}{\geq} F\onespect\left(\ones_{C_1}\right)  \stackrel{\ref{lemma_piecewise_constant_rcc}}{=} \RCC(C^*_f,\overline{C^*_f}) \ .
 \]
\end{proof}

\subsection{Solution of the inner problem}
The inner problem is convex, thus a solution can be computed by any standard method for solving convex nonsmooth programs, e.g.\ subgradient methods \cite{Ber99}. However, 
in this particular case we can exploit the structure of the problem and use the equivalent dual formulation of the inner problem. 
\begin{lemma}\label{le:DualInnerProblem}
Let $E \subset V\times V$ denote the set of edges and $A:\R^E \rightarrow \R^V$ be defined as $(A \alpha)_i = \sum_{j\,|\, (i,j) \in E} w_{ij}\alpha_{ij}$. The inner problem is equivalent to 
\[ \min_{\{ \alpha \in \R^E \,|\, \norm{\alpha}_\infty \leq 1, \; \alpha_{ij}=-\alpha_{ji} \}} \Psi(\alpha):=\norm{A \alpha - F(f^k)v^k}^2_2. \]
The Lipschitz constant of the gradient of $\Psi$ is upper bounded by $2 \max_r \sum_{s=1}^n w_{rs}^2$.
\end{lemma}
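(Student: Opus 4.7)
The plan is to derive the dual by expressing the $\ell_1$--type term $\frac{1}{2}\sum_{ij} w_{ij}|f_i - f_j|$ via its convex conjugate and then applying a minimax swap. First I would write
\[
    |f_i - f_j| \;=\; \max_{|\alpha_{ij}|\leq 1}\, \alpha_{ij}(f_i - f_j),
\]
and observe that restricting to antisymmetric $\alpha_{ij} = -\alpha_{ji}$ is without loss of generality because the symmetry of $w$ together with antisymmetry of $\alpha$ lets me rewrite
\[
    \tfrac{1}{2}\sum_{i,j} w_{ij}\alpha_{ij}(f_i - f_j) \;=\; \sum_i f_i \sum_j w_{ij}\alpha_{ij} \;=\; \inner{f, A\alpha},
\]
so that $\tfrac{1}{2}\sum_{ij} w_{ij}|f_i-f_j| = \max_{\alpha}\inner{f, A\alpha}$ over the constraint set $\mathcal{K} = \{\alpha \in \R^E : \|\alpha\|_\infty \leq 1,\ \alpha_{ij}=-\alpha_{ji}\}$.

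Next I substitute this into the inner problem and obtain the saddle-point formulation
\[
    \min_{\norm{f}_2 \leq 1}\,\max_{\alpha \in \mathcal{K}}\ \inner{f,\, A\alpha - \lambda^k v^k}.
\]
Since both constraint sets are convex and compact and the objective is bilinear in $(f,\alpha)$, Sion's minimax theorem justifies swapping the order. The inner minimization over the unit $\ell_2$--ball is then elementary: $\min_{\|f\|_2\leq 1}\inner{f, g} = -\|g\|_2$, which produces $\max_{\alpha \in \mathcal{K}} -\|A\alpha - \lambda^k v^k\|_2$. Negating and squaring (both monotone operations on the nonnegative objective) yields the equivalent problem $\min_{\alpha \in \mathcal{K}} \|A\alpha - F(f^k) v^k\|_2^2 = \min_{\alpha \in \mathcal{K}}\Psi(\alpha)$, using $\lambda^k = F(f^k)$. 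The primal optimum $g^{k+1}$ is recovered from a dual optimum $\alpha^*$ by $g^{k+1} = -(A\alpha^* - F(f^k)v^k)/\|A\alpha^* - F(f^k)v^k\|_2$.

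For the Lipschitz bound, I compute $\nabla \Psi(\alpha) = 2 A^\top (A\alpha - F(f^k) v^k)$, so the gradient is Lipschitz with constant $2\|A^\top A\|_{\mathrm{op}} = 2\|A\|_{\mathrm{op}}^2$. Applying Cauchy--Schwarz row-wise,
\[
    \|A\alpha\|_2^2 \;=\; \sum_i \bigl(\textstyle\sum_j w_{ij}\alpha_{ij}\bigr)^2 \;\leq\; \sum_i \Bigl(\textstyle\sum_j w_{ij}^2\Bigr)\Bigl(\textstyle\sum_j \alpha_{ij}^2\Bigr)
    \;\leq\; \bigl(\max_r \textstyle\sum_s w_{rs}^2\bigr)\,\|\alpha\|_2^2,
\]
which gives $\|A\|_{\mathrm{op}}^2 \leq \max_r \sum_s w_{rs}^2$ and hence the claimed bound.

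The main subtlety I anticipate is justifying the minimax swap cleanly together with the incorporation of the antisymmetry constraint; once the identity $\tfrac{1}{2}\sum_{ij} w_{ij}|f_i-f_j| = \max_{\alpha\in\mathcal{K}} \inner{f,A\alpha}$ is established the rest is routine. A minor care point is that squaring the $\ell_2$ norm preserves the argmin but changes the objective value; this is harmless since the lemma only asserts equivalence of the optimization problems (same set of optimizers).
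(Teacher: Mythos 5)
Your proposal is correct and follows essentially the same route as the paper: representing the total-variation term via its conjugate over antisymmetric $\alpha$ with $\norm{\alpha}_\infty\leq 1$, swapping min and max over the two compact convex sets, solving the inner minimization over the Euclidean ball in closed form, and bounding the Lipschitz constant by a row-wise Cauchy--Schwarz estimate (your operator-norm phrasing $2\norm{A}_{\mathrm{op}}^2\leq 2\max_r\sum_s w_{rs}^2$ is just a cleaner packaging of the paper's direct computation of $\norm{\nabla\Psi(\alpha)-\nabla\Psi(\beta)}$). No gaps.
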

\begin{proof}
First, we note that 
\begin{align*}
  \frac{1}{2}\sum_{i,j=1}^n w_{ij} |u_i - u_j| &= \max_{\{\beta \in \R^E \, |\, \norm{\beta}_\infty \leq 1 \}} \frac{1}{2}\sum_{(i,j) \in E} w_{ij} (u_i - u_j) \beta_{ij} \ .
\end{align*}
Introducing the new variable $\alpha_{ij} = \frac{1}{2}(\beta_{ij}-\beta_{ji})$, this can be rewritten as
\[
	\max_{\{ \alpha \in \R^E \,|\, \norm{\alpha}_\infty \leq 1,\; \alpha_{ij}=-\alpha_{ji} \}} \sum_{(i,j) \in E} w_{ij} \alpha_{ij} u_i 
	= \max_{\{ \alpha \in \R^E \,|\, \norm{\alpha}_\infty \leq 1,\; \alpha_{ij}=-\alpha_{ji} \}} \inner{u,A\alpha} \ ,
\]
where we have introduced the notation $(A \alpha)_i = \sum_{j\,|\, (i,j) \in E} w_{ij}\alpha_{ij}$. Both $u$ and $\alpha$ are constrained to lie in non-empty compact, convex sets, and thus we can reformulate the inner objective by the standard min-max-theorem (see e.g. Corollary 37.3.2. in \cite{Roc70}) as follows:
\begin{align*}
 	& \min_{\norm{u}_2 \leq 1}\;\max_{\{ \alpha \in \R^E \,|\, \norm{\alpha}_\infty \leq 1, \, \alpha_{ij}=-\alpha_{ji} \}}  \inner{u,A\alpha}-F(f^k)\inner{u,v^k}\\
	&= \max_{\{ \alpha \in \R^E \,|\, \norm{\alpha}_\infty \leq 1, \; \alpha_{ij}=-\alpha_{ji} \}}\;\min_{\norm{u}_2 \leq 1} \inner{u,A\alpha-F(f^k)v^k}\\
	&= \max_{\{ \alpha \in \R^E \,|\, \norm{\alpha}_\infty \leq 1, \; \alpha_{ij}=-\alpha_{ji} \}} \, -\norm{A\alpha - F(f^k)v^k}_2 \ .
\end{align*}
In the last step we have used that the solution of the minimization of the linear function over the Euclidean unit ball is given by 
\[ u^* = -\frac{A \alpha - F(f^k)v^k}{\norm{A \alpha - F(f^k)v^k}}_2 ,\] 
if $\norm{A \alpha - F(f^k)v^k}\neq 0$ and otherwise $u^*$ is an arbitrary element of the Euclidean unit ball. Transforming the maximization problem into a minimization problem finishes the proof of the first statement. Regarding the Lipschitz constant, a straightforward computation shows that 
\[ (\nabla \Psi(\alpha))_{rs} = 2 w_{rs}\Big(\sum_{j\,|\, (r,j) \in E} w_{rj}\alpha_{rj} - F(f^k)v^k_r\Big).\]
Thus,
\begin{align*}
\norm{\nabla \Psi(\alpha) - \nabla \Psi(\beta)}^2 \; &= \; 4 \sum_{(r,s) \in E} w_{rs}^2\,\Big( \sum_{j\,|\, (r,j) \in E} w_{rj}(\alpha_{rj}-\beta_{rj}) \Big)^2\\
                                                    &\leq \; 4 \sum_{(r,s) \in E} w_{rs}^2\, \Big( \sum_{j\,|\, (r,j) \in E} w^2_{rj} \sum_{i \,|\, (r,i) \in E
                                                    } (\alpha_{ri}-\beta_{ri})^2 \Big) \\
                                                    &=\; 4 \sum_{r=1}^n \Big(\sum_{s\,|\, (r,s) \in E} w_{rs}^2\Big)^2 \sum_{i \,|\, (r,i) \in E} (\alpha_{ri}-\beta_{ri})^2 \\
                                                    & \leq \; 4 \Big(\max_r \sum_{s=1}^n w_{rs}^2\Big)^2 \sum_{(r,i) \in E} (\alpha_{ri}-\beta_{ri})^2.
\end{align*}
\end{proof}

Compared to the primal problem, the objective of the dual problem is smooth.
Moreover, it can be efficiently solved using FISTA (\cite{BT09}), a two-step subgradient method with guaranteed convergence rate $O(\frac{1}{k^2})$ where $k$ is the number of steps. The only input of FISTA is an upper bound on the Lipschitz constant of the gradient of the objective.
FISTA provides a good solution in a few steps which guarantees descent in functional \eqref{eq:SecondEV} and thus makes the 
modified IPM very fast.
The resulting Algorithm is shown in Alg. \ref{alg:FISTA}.
\begin{algorithm}[tb]
   \caption{Solution of the dual inner problem with FISTA}
   \label{alg:FISTA}
\begin{algorithmic}[1]
   \STATE {\bfseries Input:} Lipschitz-constant $L$ of $\nabla \Psi$,
   \STATE {\bfseries Initialization:} $t^1=1$, $\alpha^1 \in \R^E$,
   \REPEAT
   \STATE  \begin{align*}
          \beta^{t+1}_{rs} &= \alpha^t_{rs} - \frac{1}{L} \nabla \Psi(\alpha^t)_{rs} \\
                           &=\alpha^t_{rs} - \frac{2}{L}w_{rs}\big( \sum_{j\,|\, (r,j) \in E} w_{rj} \alpha^t_{rj} - F(f^k)v^k_r\big)
          \end{align*}
   \STATE $t_{k+1} = \frac{1+ \sqrt{1+4 t_k^2}}{2}$,
   \STATE $\alpha^{t+1}_{rs} = \beta^{t+1}_{rs} + \frac{t_k-1}{t_{k+1}}\Big(\beta^{t+1}_{rs} - \beta^t_{rs}\Big)$.
   \UNTIL {stop if gap between original and dual problem is smaller than $\epsilon$}
\end{algorithmic}
\end{algorithm}

\section{Application 2: Sparse PCA} \label{sec:app_sparse_pca}
Principal Component Analysis (PCA) is a standard technique for dimensionality reduction and data analysis \cite{Jol02}. PCA finds the $k$-dimensional subspace of maximal variance in the data. For $k=1$, given a data matrix $X\in \R^{n\times p}$ where each column has mean $0$, in PCA one computes
\begin{equation}\label{eq:pca}	
	f^*= 
	     \argmax_{f \in \R^p} \frac{\inner{f, X^T X f}}{\norm{f}_2^2 } \ ,
\end{equation}
where the maximizer $f^*$ is the largest eigenvector of the covariance matrix $\Sigma = X^T X\in \R^{p \times p}$. 
The interpretation of the PCA component $f^*$ is difficult as usually all components are nonzero. 
In sparse PCA one wants to get a small number of features which still capture most of the variance.
For instance, in the case of gene expression data one would like the principal components to consist only of a few significant genes, making it easy to interpret by a human. 
Thus one needs to enforce sparsity of the PCA component, which yields a trade-off between explained variance and sparsity. 

While standard PCA leads to an eigenproblem, adding a constraint on the cardinality, i.e.\ the number of nonzero coefficients, makes the problem NP-hard. The first approaches performed simple thresholding of the principal components which was shown to be misleading \cite{CadJol95}. Since then several methods have been proposed, mainly based on penalizing the $L_1$ norm of the principal components, including SCoTLASS \cite{JolTreUdd03} and SPCA \cite{ZouHasTib06}. D'Aspremont et al.\cite{Asp08} focused on the $L_0$-constrained formulation and proposed a greedy algorithm to compute a full set of good candidate solutions up to a specified target sparsity, and derived sufficient conditions for a vector to be globally optimal. Moghaddam et al. \cite{MogWeiAvi06} used branch and bound to compute optimal solutions for small problem instances. Other approaches include D.C.  \cite{SriTorLan07} and EM-based methods \cite{SigBuh08}. Recently, Journee et al. \cite{Jou10} proposed two single unit (computation of one component only) and two block (simultaneous computation
of multiple components) methods based on $L_0$-penalization and $L_1$-penalization. 

Problem \eqref{eq:pca} is equivalent to
\[
	f^* = \argmin_{f \in \R^p} \frac{\norm{f}_2^2 } {\inner{f,\Sigma f}} = \argmin_{f \in \R^p} \frac{\norm{f}_2 } {\norm{Xf}_2} \ .
\]
In order to enforce sparsity we use instead of the $L_2$-norm a convex combination of an $L_1$ norm and $L_2$ norm in the enumerator, which yields the functional
\begin{equation}\label{eq:sparse_pca}   
F(f) = \frac{(1-\alpha)\norm{f}_2 + \alpha \norm{f}_1}{\norm{X f}_2}\ ,
\end{equation}
with sparsity controlling parameter $\alpha \in [0,1]$. Standard PCA is recovered for $\alpha=0$, whereas $\alpha=1$ yields the sparsest non-trivial solution: the component with the maximal variance. One easily sees that the formulation \eqref{eq:sparse_pca} fits in our general framework, as both enumerator and denominator are $1$-homogeneous functions. The inner problem of the IPM becomes
\begin{equation}\label{eq:innerSPCA}
g^{k+1} = \argmin_{\norm{f}_2 \leq 1} \,(1-\alpha)\norm{f}_2 + \alpha \norm{f}_1 - \lambda^k \inner{f,\mu^k}\ , \quad  \text{where } \quad \mu^k = \frac{\Sigma f^k}{\sqrt{\inner{f^k,\Sigma f^k}}}. 
\end{equation}
This problem has a closed form solution. In the following we use the notation $x_+ = \max\{0,x\}$.
\begin{lemma}\label{lemma:spca_closedform}
The convex optimization problem \eqref{eq:innerSPCA} has the analytical solution
\[
	g_i^{k+1} = \frac{1}{s} \sign(\mu^k_i) \big(\lambda^k \abs{\mu^k_i}-\alpha\big)_+, \quad \text{where } \quad s = \sqrt{\sum\nolimits_{i=1}^n (\lambda^k|\mu^k_i|-\alpha)_+^2} \ .
\]
\end{lemma}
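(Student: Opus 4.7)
My plan is to exploit the positive $1$-homogeneity of the objective $\Phi(f)=(1-\alpha)\norm{f}_2+\alpha\norm{f}_1-\lambda^k\inner{f,\mu^k}$. Writing $f=tg$ with $t\geq 0$ and $\norm{g}_2=1$, the constrained problem \eqref{eq:innerSPCA} becomes
\[
\min_{t\in[0,1]}\ \min_{\norm{g}_2=1}\ t\cdot A(g),\qquad A(g)\ :=\ (1-\alpha)+\alpha\norm{g}_1-\lambda^k\inner{g,\mu^k}.
\]
The inner scalar problem in $t$ is trivial: for fixed $g$, it selects $t=1$ whenever $A(g)<0$ and $t=0$ otherwise. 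So the task reduces to minimizing $A(g)$ over the unit sphere and then applying the optimal scaling.

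To minimize $A(g)$ over $\norm{g}_2=1$, I would first observe that $\norm{g}_1$ depends only on $|g_i|$, while $-\lambda^k\inner{g,\mu^k}=-\lambda^k\sum_i g_i\mu_i^k$ is minimized componentwise by choosing $\sign(g_i)=\sign(\mu_i^k)$. Thus at any optimizer we may substitute $g_i=\sign(\mu_i^k)h_i$ with $h_i\geq 0$, reducing the subproblem to
\[
\max_{h\geq 0,\ \norm{h}_2=1}\ \sum_i h_i\bigl(\lambda^k|\mu_i^k|-\alpha\bigr).
\]
Setting $c_i:=\lambda^k|\mu_i^k|-\alpha$ and noting that $h\geq 0$, only indices with $c_i>0$ can contribute positively to the objective, so the optimal $h$ is supported on $\{i:c_i>0\}$. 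On this support Cauchy--Schwarz (with equality iff $h$ is proportional to the vector of positive parts) yields $h_i^{\star}=(c_i)_+/s$ with $s=\sqrt{\sum_i(c_i)_+^2}$, which combined with the sign substitution gives the formula in the statement.

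The remaining step is to verify that $t=1$ is in fact the right scaling. Plugging $g^\star$ back into $A$ and using $(c_i)_+\cdot c_i=(c_i)_+^2$, a short calculation gives
\[
A(g^\star)\ =\ (1-\alpha)-s,
\]
so the proposed closed form is the true minimizer precisely in the regime $s>1-\alpha$, where $t=1$ is active. I expect the main (though minor) obstacle to be dealing cleanly with the degenerate active-set structure: when $s\leq 1-\alpha$ every unit direction gives $A(g)\geq 0$ and the true optimum is $g^{k+1}=0$, which is consistent with reading the formula via the convention that a zero numerator and zero denominator yield $0$. The rest of the proof is then a matter of bookkeeping around the sign reduction and the Cauchy--Schwarz step.
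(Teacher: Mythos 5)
Your argument is correct in substance but follows a genuinely different route from the paper. The paper first uses $1$-homogeneity to argue that the optimum is nonpositive and attained on the sphere, drops the then-constant term $(1-\alpha)\norm{f}_2$, and solves the reduced problem $\min_{\norm{f}_2\leq 1}\alpha\norm{f}_1-\lambda^k\inner{f,\mu^k}$ by passing to the dual: writing $\alpha\norm{f}_1-\lambda^k\inner{f,\mu^k}=\max_{\norm{v}_\infty\leq 1}\inner{f,\alpha v-\lambda^k\mu^k}$, swapping $\min$ and $\max$, solving the resulting separable problem $\max_{\norm{v}_\infty\leq 1}-\norm{\alpha v-\lambda^k\mu^k}_2$ componentwise, and recovering the primal solution. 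You instead stay entirely in the primal: the decomposition $f=tg$, the sign-alignment reduction to $h\geq 0$, and the Cauchy--Schwarz step on the support $\{c_i>0\}$ deliver the same soft-thresholded vector. Your route is more elementary (no minimax theorem needed) and has the virtue of making the optimal value $A(g^\star)=(1-\alpha)-s$ explicit; the paper's route is shorter and is the one that generalizes to the Cheeger-cut inner problem (Lemma \ref{le:DualInnerProblem}), which is why the authors phrase it dually.

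One caveat: your closing remark about the regime $s\leq 1-\alpha$ is not quite right. If $0<s<1-\alpha$, the numerator and denominator of the stated formula are both nonzero, so the formula returns a unit vector while your own analysis shows the true minimizer is $0$; the ``zero over zero'' convention only rescues the case $s=0$. The correct way to close this gap is the observation the paper makes in its first line: a suitably scaled copy of the previous iterate $f^k$ is feasible and achieves inner objective value $0$ (by the Euler identity $\inner{f^k,\mu^k}=S(f^k)$ and the definition of $\lambda^k$), so the optimal value of \eqref{eq:innerSPCA} is always $\leq 0$, i.e.\ $s\geq 1-\alpha$ throughout the algorithm. With that one sentence added, your proof is complete; when $s=1-\alpha$ the formula still gives \emph{a} minimizer (the value $0$ is attained along the whole segment $t\in[0,1]$), which is the termination case of the IPM.
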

\begin{proof}
We note that the objective is positively 1-homogeneous and that the optimum is either zero by plugging in the previous iterate or negative
in which case the optimum is attained at the boundary. Thus wlog we can assume that at the optimum $\norm{f}_2=1$. Thus the problem reduces
to 
\[ \min_{\norm{f}_2 \leq 1} \,\alpha \norm{f}_1 - \lambda^k \inner{f,\mu^k}.\]
First, we derive an equivalent ``dual'' problem, noting 
\begin{align*}
\alpha \norm{f}_1 - \lambda^k \inner{\mu^k,f} = \max_{\norm{v}_\infty \leq 1} \inner{f,\alpha v - \lambda^k \mu^k} \ .
\end{align*}
Using the fact that the objective is convex in $f$ and concave in $v$ and the feasible set is compact, we obtain by the min-max equality:
\begin{align*}
\min_{\norm{f}_2\leq 1} \;\max_{\norm{v}_\infty \leq 1} \inner{f,\alpha v - \lambda^k \mu^k} 
& = \max_{\norm{v}_\infty \leq 1} \;\ \min_{\norm{f}_2\leq 1} \inner{f, \alpha v - \lambda^k \mu^k} \\
& = \max_{\norm{v}_\infty \leq 1} - \norm{\alpha v - \lambda^k \mu^k}_2 \ .
\end{align*}
The objective of the dual problem is separable in $v$ and the constraints of $v$ as well. Thus each component can be optimized
separately which gives
\[ v_i = \sign(\mu^k_i)\min\left\{1,\frac{\lambda^k |\mu^k_i|}{\alpha}\right\}.\]
Using that $f^*= (-\alpha v + \lambda^k \mu^k) / \norm{\lambda^k \mu^k-\alpha v}_2$, we get the solution
\[ f_i = \frac{\sign(\mu^k_i)(\lambda^k |\mu^k_i|-\alpha)_+}{\sqrt{\sum_{i=1}^n (\lambda^k |\mu^k_i|-\alpha)_+^2}}.\]
\end{proof}

As $s$ is just a scaling factor, we can omit it 
and obtain the simple and efficient scheme to compute sparse principal components shown in Algorithm~\ref{alg:sparse_pca}.
While the derivation 
is quite different from \cite{Jou10}, the resulting algorithms are very similar. The
subtle difference is that in our formulation the thresholding parameter of the inner problem depends on the current eigenvalue estimate whereas it is fixed in \cite{Jou10}.
Empirically, this leads to the fact that we need slightly less iterations to converge.
\begin{algorithm}[htb]
   \caption{Sparse PCA}
   \label{alg:sparse_pca}
\begin{algorithmic}[1]
   \STATE {\bfseries Input:} data matrix $X$, sparsity controlling parameter $\alpha$, accuracy $\epsilon$
   \STATE {\bfseries Initialization:} $f^0 = \text{random}$ with $S(f^k) = 1$, $\lambda^0 =F(f^k)$
   \REPEAT
   \STATE $g_i^{k+1} = \sign(\mu^k_i) \big(\lambda^k \abs{\mu^k_i}-\alpha\big)_+$, 
   \STATE $f^{k+1} = \frac{g^{k+1}}{\norm{Xg^{k+1}}_2}$
   \STATE $\lambda^{k+1}= (1-\alpha)\norm{f^{k+1}}_2 + \alpha \norm{f^{k+1}}_1$
   \STATE $\mu^{k+1} = \frac{\Sigma f^{k+1}}{\norm{Xf^{k+1}}_2}$
 	\UNTIL $\frac{\abs{\lambda^{k+1}-\lambda^k}}{\lambda^k}< \epsilon$
\end{algorithmic}
\end{algorithm}

\section{Experiments}\label{sec:experiments}

\paragraph{1-Spectral Clustering:}
We compare our IPM with the total variation (TV) based algorithm by \cite{SB10}, $p$-spectral clustering with $p=1.1$ \cite{BH09} as well as standard spectral clustering with optimal thresholding the second eigenvector of the graph Laplacian ($p=2$). The graph and the two-moons dataset is 
constructed as in \cite{BH09}. 
The following table shows the average ratio Cheeger cut (RCC) and error (classification as in \cite{BH09}) for 100 draws of a two-moons dataset with 2000 points. In the case of the IPM, we use the best result of 10 runs with random initializations and one run initialized with the second eigenvector of the unnormalized graph Laplacian. For \cite{SB10} we initialize once with the second eigenvector of the normalized graph Laplacian as proposed in \cite{SB10} and 
10 times randomly. 
IPM and the TV-based method yield similar results, slightly better than $1.1$-spectral and clearly outperforming standard spectral clustering. In terms of runtime, IPM and \cite{SB10} are on the same level.
{\footnotesize
\begin{center}
\begin{tabular}{c |c |c | c | c}  
				& Inverse Power Method &   Szlam \& Bresson \cite{SB10} & $1.1$-spectral \cite{BH09} & Standard spectral \\
				\hline
Avg. $\RCC$	& 0.0195 ($\pm$ 0.0015)	& 0.0195 	($\pm$ 0.0015) & 0.0196   ($\pm$ 0.0016)	& 0.0247  ($\pm$ 0.0016)\\
Avg. error	& 0.0462 ($\pm$ 0.0161) & 0.0491 	($\pm$ 0.0181) & 0.0578 	($\pm$ 0.0285)  & 0.1685  ($\pm$ 0.0200)
\end{tabular} \end{center}
}

\begin{figure}[ht]
	\centering
		\includegraphics[width=0.27\textwidth]{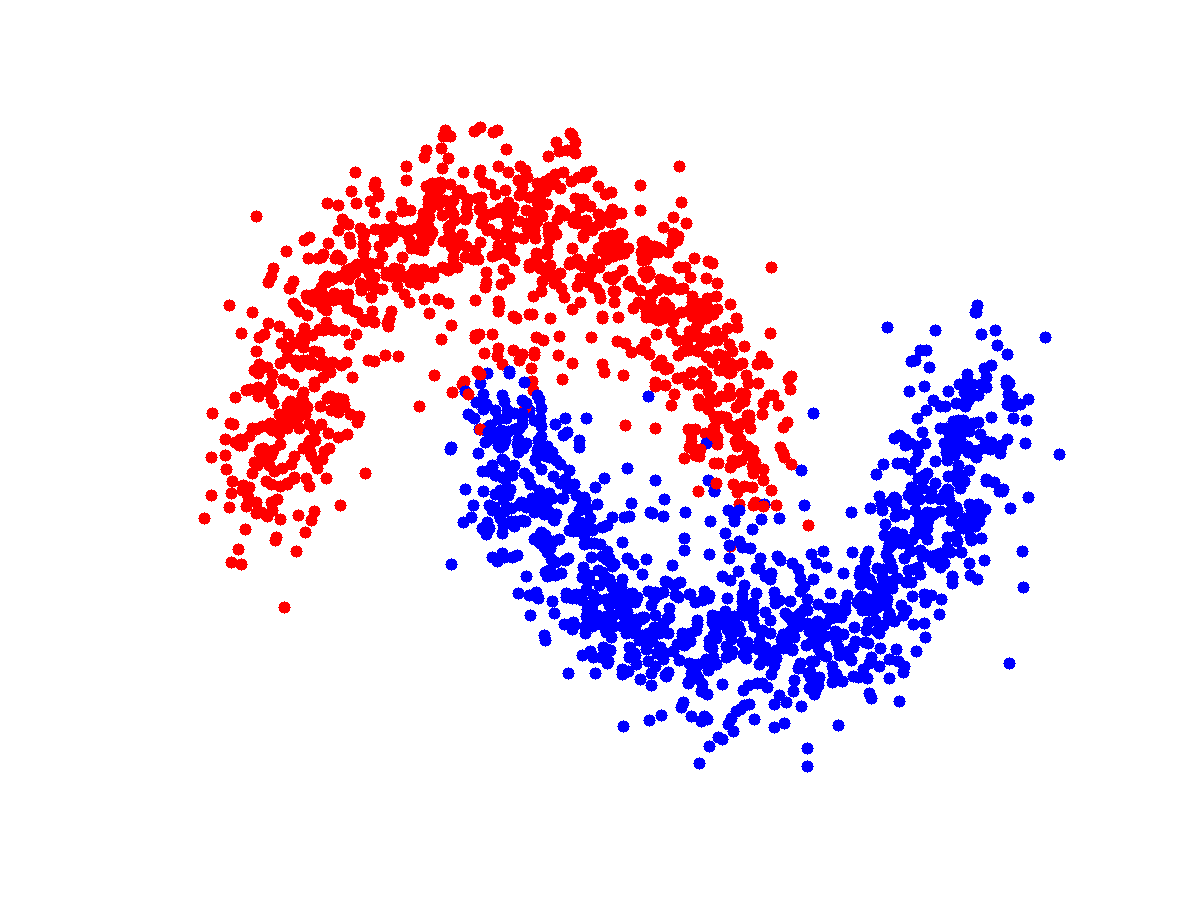}
		\includegraphics[width=0.27\textwidth]{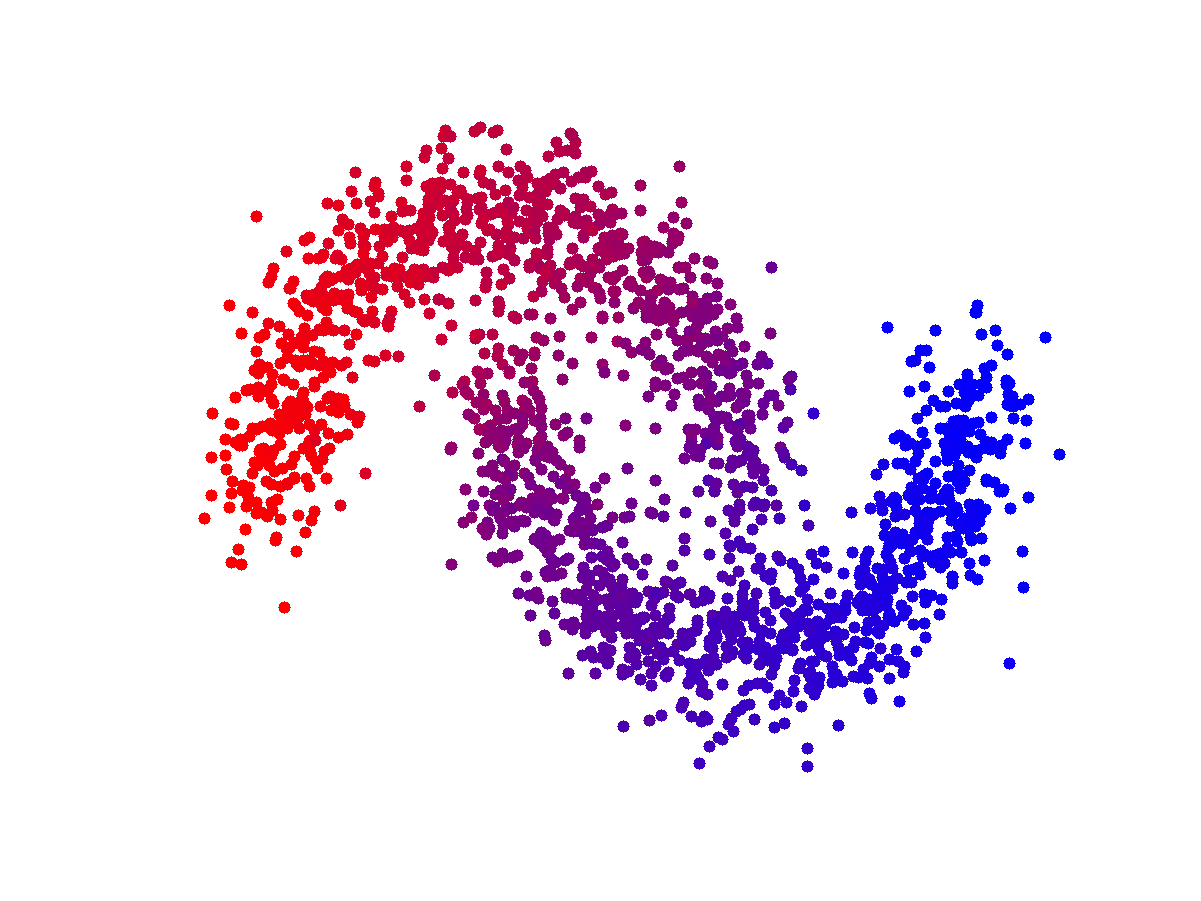}
		\hspace{1cm}
		\includegraphics[width=0.28\textwidth]{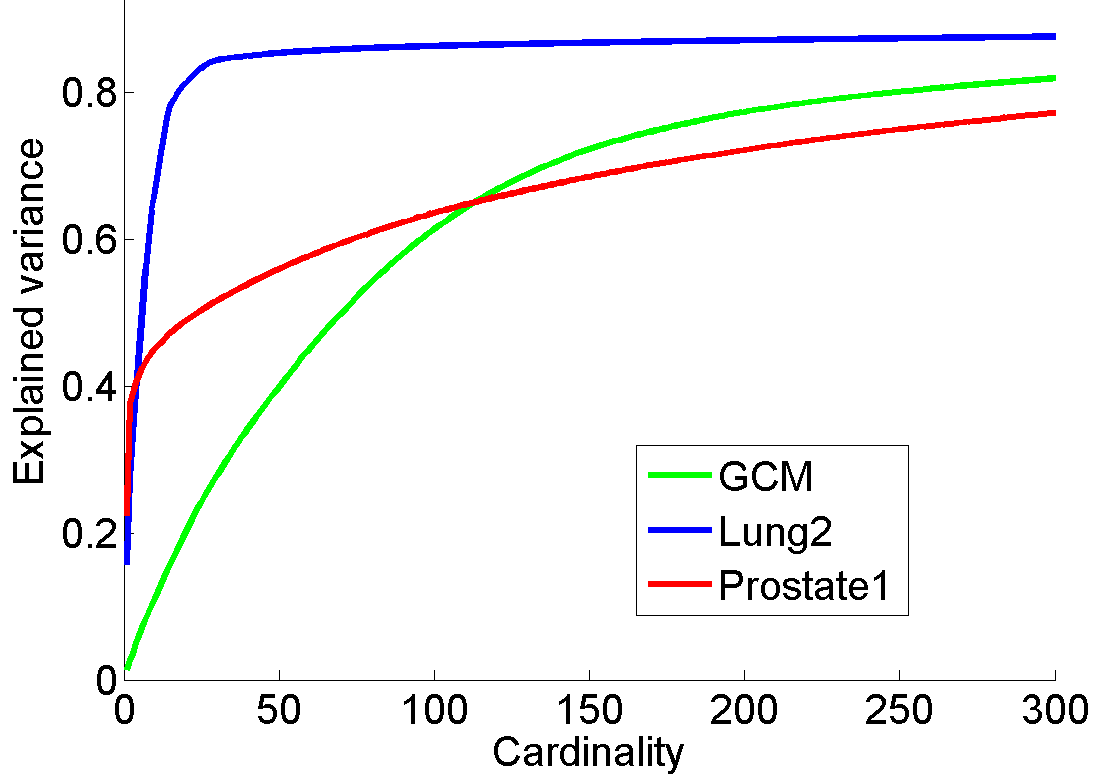}
	\caption{Left and middle: Second eigenvector of the 1-Laplacian and 2-Laplacian, respectively. Right: Relative Variance (relative to maximal possible variance) versus number of non-zero components for the three datasets Lung2, GCM and Prostate1.}
		\label{fig:results-two-moons}
\end{figure}

Next we perform unnormalized $1$-spectral clustering on the full USPS and MNIST-datasets ($9298$ resp.\ $70000$ points). As clustering criterion we use the multicut version of $\Rcut$, given as
\[
	\Rcut(C_1,\dots,C_K)= \sum_{i=1}^K \frac{\cut(C_i,\overline{C_i})}{\abs{C_i}} \ .
\]
We successively subdivide clusters until the desired number of clusters ($K=10$) is reached. In each substep the eigenvector obtained on the subgraph is thresholded such that the multi-cut criterion
is minimized.
This recursive partitioning scheme is used for all methods. As in the previous experiment, we perform one run initialized with the thresholded second eigenvector of the unnormalized graph Laplacian in the case of the IPM and with the second eigenvector of the normalized graph Laplacian in the case of \cite{SB10}. In both cases we add 100 runs with random initializations. The next table shows the obtained $\Rcut$ and errors.

{\footnotesize
\begin{center}
\begin{tabular}{c |c |c |c |c |c }  

				& & Inverse Power Method &   S.\&B. \cite{SB10} & $1.1$-spectral \cite{BH09} & Standard spectral\\
 				\hline
MNIST	& Rcut	& 0.1507	& 0.1545	& 0.1529	& 0.2252\\
			& Error	& 0.1244	& 0.1318	& 0.1293	& 0.1883\\
\hline
USPS 	& Rcut	& 0.6661	& 0.6663	& 0.6676	& 0.8180\\
			& Error	& 0.1349	& 0.1309	& 0.1308	& 0.1686
\end{tabular}
\end{center}
}
Again the three nonlinear eigenvector methods clearly outperform standard spectral clustering. Note that our 
method requires additional effort (100 runs) but we get better results. For both datasets our method achieves the best $\Rcut$. However, if one wants to do only a single run, by Theorem~\ref{th:better_than_2spect} for bi-partitions one achieves a cut at least as good as the one of standard spectral clustering if one initializes with the thresholded 2nd eigenvector of the 2-Laplacian.

\paragraph{Sparse PCA:} We evaluate our IPM for sparse PCA on gene expression datasets obtained from \cite{WeiLi}. We compare with two recent algorithms: the $L_1$ based single-unit power algorithm of \cite{Jou10} as well as the EM-based algorithm in \cite{SigBuh08}. 
For all considered datasets, the three methods achieve very similar performance in terms of the tradeoff between explained variance and sparsity of the solution, see Fig.\ref{fig:results-two-moons} (Right). In fact the results are so similar that for each dataset, the plots of all three methods coincide in one line.  
In \cite{Jou10} it also has been observed that the best state-of-the-art algorithms produce the same trade-off curve if one uses the same initialization strategy.

\paragraph{Acknowledgments:}
This work has been supported by the Excellence Cluster on
Multimodal Computing and Interaction at Saarland University.

 {\small
\bibliography{literatur}
}
\end{document}